\def\secref#1{section~\ref{#1}}
\def\eqref#1{equation~\ref{#1}}
\def\algref#1{algorithm~\ref{#1}}
\def\1{\bm{1}}
\DeclareMathAlphabet{\mathsfit}{\encodingdefault}{\sfdefault}{m}{sl}
\SetMathAlphabet{\mathsfit}{bold}{\encodingdefault}{\sfdefault}{bx}{n}
\newcommand{\E}{\mathbb{E}}
\newtheorem{theorem}{Theorem}[section]
\newtheorem{lemma}[theorem]{Lemma}
\newtheorem{corollary}[theorem]{Corollary}
\newtheorem{definition}[theorem]{Definition}
\newtheorem{remark}[theorem]{Remark}
\newcommand{\reals}{\mathbb{R}}
\newcommand{\tr}{\text{Tr}}
\newcommand{\spec}{\mathrm{sp}}
\newcommand{\spn}{\mathrm{span}}
\newcommand{\val}{\mathrm{Val}}
\newcommand{\fr}{\mathrm{fr}}
\newcommand{\be}{\mathbf{e}}
\newcommand{\bx}{\mathbf{x}}
\newcommand{\bw}{\mathbf{w}}
\newcommand{\bv}{\mathbf{v}}
\newcommand{\by}{\mathbf{y}}
\newcommand{\Xcal}{\mathcal{X}}
\newcommand{\Ical}{\mathcal{I}}
\newcommand{\Dcal}{\mathcal{D}}
\newcommand{\Hcal}{\mathcal{H}}
\newcommand{\Rcal}{\mathcal{R}}
\newcommand{\Ycal}{\mathcal{Y}}
\newcommand{\Pcal}{\mathcal{P}}
\newcommand{\norm}[1]{\|#1\|}
\newcommand{\inner}[1]{\left\langle#1\right\rangle}
\newcommand{\sd}{\mathrm{Rx}}
\renewcommand{\eqref}[1]{Eq.~(\ref{#1})}
\newcommand{\lemref}[1]{Lemma~\ref{#1}}
\newcommand{\thmref}[1]{Thm.~\ref{#1}}
\newcommand{\appref}[1]{Appendix~\ref{#1}}
\renewcommand{\secref}[1]{Section~\ref{#1}}
\renewcommand{\eqref}[1]{Eq.~(\ref{#1})}
\newcommand{\printfnsymbol}[1]{%
  \textsuperscript{\@fnsymbol{#1}}%
}
\title{RedEx: Beyond Fixed Representation Methods \\via Convex Optimization}
\author[1, 2]{Amit Daniely}
\author[1]{Mariano Schain}
\author[1, 3]{Gilad Yehudai}
\affil[1]{\small Google Research Tel Aviv}
\affil[2]{\small Hebrew University of Jerusalem}
\affil[3]{\small Weizmann Institute of Science}
\date{}
\begin{document}
\maketitle

\begin{abstract}
Optimizing Neural networks is a difficult task which is still not well understood. On the other hand, fixed representation methods such as kernels and random features have provable optimization guarantees but inferior performance due to their inherent inability to learn the representations. In this paper, we aim at bridging this gap by presenting a novel architecture called RedEx (Reduced Expander Extractor) that is as expressive as neural networks and can also be trained in a layer-wise fashion via a convex program with semi-definite constraints and optimization guarantees. We also show that RedEx provably surpasses fixed representation methods, in the sense that it can efficiently learn a family of target functions which fixed representation methods cannot.
\end{abstract}

\section{Introduction}

Neural networks have demonstrated unparalleled performance in various tasks, including Computer Vision and Natural Language Processing (NLP). However, training them remains a challenging task that is not yet fully understood. On the theoretical side, the optimization landscape of neural networks is highly non-convex, characterized by numerous spurious local minima \citep{safran2018spurious,yun2018small} and are often also non-smooth. Consequently, proving optimization results for non-convex and non-smooth functions is generally deemed unfeasible \cite{kornowski2021oracle}. On the practical side, the optimization process for neural networks primarily employs gradient-based methods like Stochastic Gradient Descent (SGD) or ADAM \cite{kingma2014adam}, necessitating a meticulous search for hyperparameters. This process often relies on trial and error rather than being firmly grounded in theory.

On the contrary, fixed representation methods, such as kernels and random features, can be efficiently learned with provable guarantees using convex optimization techniques. However, recent research has highlighted a limitation: as they do not learn a representation, these methods are inherently less powerful than neural networks. There are learning scenarios where neural networks demonstrate efficient learning, while fixed representation methods falter (e.g. \cite{yehudai2019power, kamath2020approximate, malach2021connection, ghorbani2019limitations, daniely2020learning}). Different approaches that do facilitate efficient and provable representation learning often rely on overly simplistic models (e.g. \cite{yehudai2020learning, vardi2021learning, bietti2022learning}) or necessitate stringent assumptions about the data and employ specialized algorithms tailored to specific learning contexts (e.g. \cite{ge2017learning, allen2020backward, abbe2021staircase}).

A natural question that arises is whether there exists a model-class which benefits from the "best of all worlds", namely:

\begin{quote}
    \emph{Is there a model class that can be learned efficiently without making assumptions about the input distribution, matches the expressiveness of neural networks, and is capable of learning meaningful representations rather than relying on fixed ones?}
\end{quote}

In this paper, we provide an affirmative answer to this question by introducing the \textbf{Reduced Extractor Expander (RedEx)} architecture. We demonstrate that RedEx is as expressive as neural networks and can be learned using a convex program without any assumptions on the input data. Moreover, we establish that RedEx learns non-trivial representations, as evidenced by a novel learning problem we introduce that RedEx can efficiently learn, while fixed representation methods cannot.
In more details, our main contributions are:

\begin{enumerate}
    \item We introduce the RedEx architecture and show that it can efficiently express any Boolean circuit (\thmref{thm:RedEx epressive}). 
    \item We present an efficient polynomial-time algorithm for training RedEx, based on convex Semidefinite Programming (SDP) (\thmref{thm:reduction learning RedEx} and \algref{alg:1-layer RedEx}).
    \item We introduce a learning problem, based on a variation of the sparse-parity task which RedEx can learn efficiently, while any fixed-representation methods cannot (\thmref{thm:main_layerwise} and  \thmref{thm:kernel_lowerbound}).
    
\end{enumerate}

Furthermore, we demonstrate that if the output is one-dimensional, RedEx can be trained using standard gradient-based methods like gradient descent, without the need for SDP. Finally we extend the RedEx architecture to the convolutional setting.

\subsection{Related Works}
\paragraph{Fixed representation methods and NTK.} 
    Fixed representation methods are models which can be viewed as a feature mapping which is non-linear and fixed followed by a learned linear mapping. This includes kernel methods, random features \citep{RahimiRe07}, and others.
    In recent years, neural networks under certain assumptions were analyzed in the so called "kernel regime" \citep{woodworth2020kernel}. In this approach, it is assumed that the training takes place near the initial weights. This allows to analyse neural networks as if it is a fixed representation method.
    This approach was popularized through the Neural Tangent Kernel (NTK) model \citep{jacot2018neural}. Many similar works have shown positive results where neural networks can provably learn under different assumptions, e.g. \cite{daniely2019neural, andoni2014learning, du2017gradient, daniely2017sgd, allen2019learning, li2018learning, cao2019generalization}.


\paragraph{Limitations of fixed representation methods.} Several works in recent years have focused on the limitations of fixed representation methods, NTK and learning under the "kernel regime". \cite{yehudai2019power} and \cite{kamath2020approximate} have shown that fixed representation methods cannot learn even a single ReLU neuron under Gaussian distribution, unless the number of features is exponential in the input dimension. On the other hand, neural networks were shown to be able to efficiently learn single neurons \citep{yehudai2020learning,vardi2021learning}.  Several other works have shown that under certain distributional assumptions fixed representation methods cannot learn parity functions while neural networks can (see e.g. \cite{malach2021connection, daniely2020learning, malach2021quantifying}). Finally, \cite{ghorbani2019limitations,ghorbani2021linearized} have shown that the NTK and random features methods can essentially learn efficiently only low degree polynomials.

\paragraph{Provable optimization beyond fixed representations.} Several works consider model-classes which go beyond fixed representations, but can be efficiently and provably learned. These works usually consider either overly-simplistic models, or have strong assumptions on the input data. \cite{yehudai2020learning,vardi2021learning,bietti2022learning,bruna2023single,frei2020agnostic} consider learning single neurons or single index neurons with provable optimization guarantees. However these models are overly-simplistic and have very limited expressiveness. \cite{abbe2021staircase,allen2020backward} consider a certain hierarchical model resembling RedEx and show a family of functions that these models can learn. However the guarantees are for a specific family of input distributions, with a training algorithm that is tailored for these specific learning problems. \cite{ge2017learning,tian2017analytical} consider learning a one-hidden layer neural network with gradient descent for Gaussian inputs using a specific analytic formula relying on the distribution of the data. Our model also share similarity to phase retrieval methods (e.g. \cite{candes2013phaselift,
candes2015phase}), although these works mostly consider Gaussian data or data distributed uniformly on a sphere. 

\cite{livni2014computational} consider the problem of learning a one-hidden layer network with square activation under trace norm constraints. They prove learnability using a reduction to a convex program relying on the GECO algorithm \citep{shalev2011large}. Our work is similar to that in nature, however we provide several extensions: (1) A separation result between fixed representation methods and RedEx, which do not appear in \cite{livni2014computational}; (2) Extensions to multivariate output and a convolutional structure, which is not possible using the convex reduction in \cite{livni2014computational}; and (3) A multilayer version of RedEx which enables to express any Boolean circuit, and thus match the expressive power of neural networks.

\section{Notations and Settings}
We denote vectors in bold-face: $\bx$. We will assume the input space is $\reals^d$. The output 
space will be denoted $\Ycal$. We will consider algorithms 
that learn functions from $\reals^d$ to $\reals^k$ and are 
evaluated by a convex loss function $\ell:\reals^k\times \Ycal\to [0,\infty)$. Given a distribution $\Dcal$ on 
$\reals^d\times\Ycal$ and $h:\reals^d\to\reals^k$
we denote 
$\ell_\Dcal(h) = \E_{(\bx,y)\sim\Dcal}\ell(h(\bx),y)$. Likewise, for a dataset $S=\{(\bx_1,y_1),\ldots,(\bx_m,y_m)\}$ we denote $\ell_S(h)=\frac{1}{m}\sum_{i=1}^m \ell(h(\bx_i),y_i)$.

For $\bx\in\reals^d$ we denote by $\bx^{\otimes 2}:=\bx\bx^\top\in \reals^{d\times d}$ the outer product. 
For a vector, $\bx\in\reals^d$ we will use $\|\bx\| = \sqrt{\sum_{i}x^2_i}$ to denote the Euclidean norm. Given $I\subset [d]$ we denote $\bx(I) = \sum_{i\in I}x_i$.
For a matrix $A$, we will use $\|A\|_\fr = \sqrt{\sum_{ij}A^2_{ij}}$ to denote the Frobenius norm,  $\|A\|_\spec = \max_{\|\bx\|=1}\|A\bx\|$ to denote the spectral norm, and $\|A\|_\tr$ to denote the trace norm which is the sum of $A$'s singular values.  For a diagonal matrix $D$ we denote by $|D|$ the  diagonal matrix whose $i$-th diagonal coordinate is equal to $|D_{i,i}|$. If $D$ is also Positive Semi-Definite (PSD) we define by $\sqrt{D}$ the matrix whose $i$-th diagonal coordinate is equal to $\sqrt{D_{i,i}}$.  
We will use $\vec{A}$ to denote a tuple of matrices $\vec{A} = (A_1,\ldots,A_k)\in \left(\reals^{d\times n}\right)^k$. We will let $\|\vec{A}\|^2_\fr=\sum_{i=1}^k\|A_i\|^2_\fr$.
For $B\in \reals^{m\times d}$ and $C\in \reals^{n\times m}$ we denote $B\vec{A}= (BA_1,\ldots,BA_k)$ and $\vec{A}C= (A_1C,\ldots,A_kC)$.
We denote by $B^d_M\subset\reals^d$ the Euclidean ball of radius $M$ centered at $0$.

For a symmetric matrix $A\in \reals^{d\times d}$, we say that $A = U^\top D U$ is a {\em compact orthogonal diagonalization} if $U\in \reals^{d'\times d}$ is a matrix with $d'\le d$ orthonormal rows and $D\in \reals^{d'\times d'}$ is a diagonal matrix with non-zero diagonal entries, or the $0$ matrix in $\reals^{1\times 1}$. Note that any symmetric matrix has a compact orthogonal diagonalization. For a linear subspace $V\subset \reals^m$, we denote by $P_V$ the projection on $V$ and say that $A$ is $V$-supported if $A = P_V^\top A P_V$.

For an embedding $\Psi:\Xcal\to\reals^n$ we denote by $\Hcal_{\Psi}$ the space of all function for which there is $\bv\in\reals^n$ such that $\forall\bx\in \Xcal,\;h(\bx) = \inner{\bv,\Psi(\bx)}$. We also define a norm on $\Hcal_\Psi$ by $\|h\|_{\Psi} = \min\{\|\bv\| : \forall\bx\in \Xcal,\;h(\bx) = \inner{\bv,\Psi(\bx)}\}$. We note that $\|\cdot\|_{\Psi}$ turn $\Hcal_\Psi$ into a Hilbert space. We also define $k_{\Psi}(\bx,\by) = \inner{\Psi(\bx),\Psi(\by)}$. For two matrices $A,B\in\reals^{n\times m}$ we define $\inner{A,B}:= \tr(B^\top A)$.

\section{Reduced Extractor-Expanders (RedEx)}\label{Sec:RedEx}

In this section we present the main architecture that we analyse throughout the paper. This architecture is aimed at being analogous to 2-layer neural networks with a quadratic activation. Its base component consists of two layers: The first layer "extracts" the most informative directions of the data using a matrix with orthogonal rows, and bounded Frobenius norm, and then ``expands" it quadratically by increasing the dimension. The second layer is a linear transformation with bounded norm over the extracted features.

\begin{definition}[RedEx - Reduced Extractor-Expander]\label{def:redex}
    A function $\Lambda:\reals^d\rightarrow\reals^k$ is called a \textbf{RedEx} (Reduced Expander Extractor) of width $M$ if it is of the form: $\Lambda = \Psi_{\vec{P}} \circ \Psi_V$ where:
    \begin{enumerate}
        \item The function $\Psi_V:\reals^d\rightarrow \reals^{d'\times d'}$ is of the form: 
        \[
            \Psi_V(\bx) = (V\bx)^{\otimes 2}
        \]
        for $V\in \reals^{d'\times d}$ with $d'\le d$, orthogonal rows, and $\|V\|^2_\fr \le M$. We call the matrix $V$ {\em extractor}. 
        \item The function $\Psi_{\vec{P}}:\reals^{d'\times d'}\rightarrow\reals^k$ is of the form:
        \[
            \Psi_{\vec{P}}(X) = \left( \inner{P_1,X},\dots,\inner{P_k,X}\right)
        \]
        for $P_i\in\reals^{d'\times d'}$ with $\norm{P_i}_\text{sp}\leq 1$.
    \end{enumerate} 
    The function $\Psi_V$ is called an \textbf{extractor-expander}.
\end{definition}

To have some intuition, the width of an extractor $V$
can be thought of as a continuous surrogate to the number of orthogonal directions $V$ extracts. Indeed, in order to extract $d'$ orthogonal dimensions defined by unit vectors $\be_1,\ldots,\be_{d'}$ we can use the extractor $V\in \reals^{d'\times d}$ whose $i$'th row is $\be_i^\top$. In this case, the width of $V$ is $d'$. Definition \ref{def:redex} generalizes such extractor matrices, and allows to give larger weights to directions which are ``more important".
The width $M$ controls the expressivity of the architecture. 
Allowing large width will result with more functions that can be expressed, but on the other hand will require more examples to learn them. Alternatively, small width will result with a less expressive class of function, but with better generalization capabilities.

Given a data set $(\bx_1,\by_1),\dots (\bx_m,\by_m)\in \reals^d\times \reals^k$ our algorithm will seek a RedEx $\Psi_{\vec{P}} \circ \Psi_V$
that minimizes the loss subject to a width constraint. We will also allow for an additional small regularization term, that will be used to guarantee generalization. 
Specifically, given a loss function $\ell$, and for the regularization function
\begin{equation}\label{eq:reg_function_intro}
\Rcal(\vec{P},V) = \|V^\top V\|^2_\fr + \sum_{i=1}^k\|V^\top P_iV\|^2_\fr    
\end{equation}
our algorithm will minimize: $\ell_S(\Psi_{\vec{P}} \circ \Psi_V) + \lambda \Rcal(\vec{P},V)$, subject to the constraint that the width of $V$ is at most $M$.
We will later explain how this can be done in polynomial time, and will provide guaranties on its performance.
We first extend the above architecture to multi-layer RedEx in the same manner that a 2-layer neural network is extended to multi-layer. The basic idea is to use several extractor-expanders $\Psi_{V^t}$ in a sequential manner, and at the last layer use a linear transformation $\Psi_{\vec{P}}$.
As with the basic depth-two RedEx architecture, the width of the extractors $V^t$ will control the complexity of the functions computed by the architecture, and will be used to trade-off expressive power and sample complexity.

There are two issues that arise when doing such a generalization: (1) The representation dimension grows exponentially with the number of layers. This is because the function $\Psi_V(\bx) = (V\bx)^{\otimes 2}$ expands the dimension quadratically. (2) Training all the layers simultaneously is computationally hard (as we show later, poly-sized deep RedEx architectures can express any poly-sized Boolean circuit, which implies that they are hard to learn \cite{kearns1994cryptographic}), whereas one of our main goals is to obtain provable guarantees for the optimization process. 

To deal with the first issue,
we allow extractors whose output dimension is at most the number of examples $m$. This limits the representation dimension to $m^2$. This is a convenient way to deal with this issue theoretically, however in practice alternative approaches might be favourable. 
For instance, we can simply delete all the rows in the extractors $V$ whose norm is $\le \epsilon$ for some tunable parameter $\epsilon$. For a sufficiently small $\epsilon$, this will not alter the solution by much. We note that the number of rows with norm that is larger than $\epsilon$ is at most the width of the extractor, divided by $\epsilon^2$. 
It is also possible to use the kernel trick. Lastly, a more practically oriented way, is to apply a dimension reduction method, such as PCA, after every extractor-expander layer.

To address the second issue, instead of training all the layers simultaneously, we train them sequentially. This is analogous to layer-wise training in neural networks \cite{bengio2006greedy}. The idea is that for $t$-th layer, we find functions $\Psi_{V^{t}}$ and $\Psi_{\vec{P}^{t}}$ which minimizes the target loss. After training is finished we only keep the representation function $\Psi_{V^{t}}$, while discarding the linear transformations. The input for the next layer $t+1$ is the output of the extractor-expander $\Psi_{V^{t}}$.

Another small issue that arises for extractor-expanders is that they only allow to compute degree-2 polynomials which are even (i.e. satisfy $p(\bx)=p(-\bx)$). More generally, multi-layer RedEx would only allow to compute even high degree polynomials. This issue can be easily fixed by adding an extra fixed coordinate to the inputs. We summarize all the above in Algorithm \ref{alg:train RedEx}.

\begin{algorithm}[ht]
	\caption{Training multi-layer RedEx}
	\begin{algorithmic}[1]\label{alg:train RedEx}
		\STATE \textbf{Parameters: } A loss $\ell:\reals^k\times \Ycal\to [0,\infty)$, number of layers $L$, width parameters $M_1,\ldots,M_L$,  regularization parameters $\lambda_1,\ldots,\lambda_L$, and constant parametr $c$.
		\STATE \textbf{Input: } A dataset $(\bx_1,y_1),\dots (\bx_m,y_m)\in \reals^d\times \Ycal$.
		\STATE Define $\bx_1^{0} = \begin{pmatrix}c \\ \bx_1 \end{pmatrix},\dots, \bx_m^{0} = \begin{pmatrix}c \\ \bx_m \end{pmatrix}$ and $d_0:= d + 1$ 
		\FOR{t=1,\dots,L}
		\STATE Find $V^{t}$ and $\vec{P}^{t} = \left(P_1^{t},\dots, P_k^{t}\right)$ that minimizes:
		\begin{equation}\label{eq:min inside RedEx algo}
		\frac{1}{m}\sum_{i=1}^{m} \ell\left(\Psi_{\vec{P}^{t}} \circ \Psi_{V^{t}}(\bx_i^{t-1}), y_i\right)	+ \lambda_t\Rcal(\vec{P}^{t}, V^{t})
		\end{equation}
		where: 
		\begin{enumerate}
		    \item $\Psi_{\vec{P}^{t}} \circ \Psi_{V^{t}}(\bx) := \left(\inner{P_1^{t}, (V^{t}\bx)^{\otimes 2}},\dots,\inner{P_k^{t}, (V^{t}\bx)^{\otimes 2}}\right)$.
		    \item $V^{t}$ has $n_t\le m$ orthogonal rows. Each row is a vector in $\reals^{d_{t-1}}$, and $\|V^t\|_F^2\le M_t$
		    \item For each $j\in[k]$, $P_j\in\reals^{n_t\times n_t}$ and $\norm{P^{t}_j}_\spec\leq 1$.
		\end{enumerate}
		\STATE Define $d_t = n^2_t$ and $\bx_i^{t} = \Psi_{V^{t}}\left(\bx_i^{t-1}\right)$ for $i\in[m]$. We will view $\bx_i^{t}$ as a vector in $\reals^{d_t}$ 
		\ENDFOR
		\STATE \textbf{Output:} Output the hypothesis $\Psi_{\vec{P}^{L}}\circ \Psi_{V^{L}} \circ \Psi_{V^{L-1}} \circ\ldots \circ \Psi_{V^{1}} $
\end{algorithmic}
\end{algorithm}

The heart of the above algorithm is to minimize \eqref{eq:min inside RedEx algo} in step 5. On one hand, it can be done using standard gradient methods such as GD or SGD. The problem with this approach is that the function being optimized is not convex, even without the norm and orthogonality constraints. Thus, it is not clear that it converges to a global optimum. In the next section we provide an efficient algorithm for finding this optimum, thus providing an efficient and provable algorithm for layer-wise learning of multi-layer RedEx.

We emphasize that one caveat of the layer-wise optimization approach is that a global minimizer for all the layers simultaneously might achieve better performance than a global minimizer for each layer separately. It can be viewed as a greedy algorithm, where at each step we optimize the current layer which is locally the best possible step, but it may not be the best step globally (i.e. for all layers simultaneously). The main advantage of this sequential approach is that it will allow us to use convex optimization and give provable guarantees for optimizing our model, while also providing separation between this model and fixed representation methods. Note that there are no layer-wise training guarantees for neural networks in general. 

We now show an expressivity result, namely that the RedEx architecture can approximate any Boolean circuit with only a quadratic increase of the size of the circuit. This shows that like neural networks, our architecture can express a very large class of functions -- virtually any function of interest. Indeed, any function that can be computed efficiently has a small circuit that computes it \cite{vollmer1999introduction}.
\begin{theorem}\label{thm:RedEx epressive}
    Let $B:\{0,1\}^d\rightarrow\{0,1\}$ be a function computed by a Boolean circuit of size $T$. Then we can define a RedEx with depth $O(T)$ and intermediate feature dimension at most $O(T^2)$ that computes $B$.
\end{theorem}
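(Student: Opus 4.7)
The plan is to simulate the circuit one gate per layer, yielding depth $O(T)$. Without loss of generality $B$ is computed by a fan-in-$2$ NAND circuit of size $T$ in which no gate takes the constant $1$ as an input (any universal basis costs only a constant blowup, and $\mathrm{NOT}(x)=\mathrm{NAND}(x,x)$ eliminates bias inputs). Let $g_1,\ldots,g_T$ be its gates in topological order, with $g_T$ the output, and let $x_{j_1},\ldots,x_{j_r}$ (with $r\le 2T$) be the inputs actually read by the circuit. Two elementary observations drive the construction: $\mathrm{NAND}(a,b)=1-ab$ is a degree-$2$ polynomial, and, with the bias coordinate $c=1$ supplied by Algorithm~\ref{alg:train RedEx}, the identity map $a\mapsto a=c\cdot a$ is also degree $2$. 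Hence moving from one Boolean state to the next---preserving all existing coordinates and adding one new NAND---amounts to passing from one quadratic form to another.

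I will maintain the invariant that, after choosing $V^1,\ldots,V^t$, the ``logical state'' $\bs^{t-1}:=(1,x_{j_1},\ldots,x_{j_r},g_1,\ldots,g_{t-1})$ of size $n_t:=r+t$ satisfies $\bs^{t-1}=C_t\bu^t$ for some invertible $C_t\in\reals^{n_t\times n_t}$, where $\bu^t:=V^t\bx^{t-1}$. The base case $t=1$ is immediate: take $V^1$ with rows $\be_0,\be_{j_1},\ldots,\be_{j_r}$, which gives $\bu^1=\bs^0$ and $C_1=I$. For the inductive step, by the two observations above each coordinate of $\bs^t$ equals either $\bs^{t-1}_0\cdot\bs^{t-1}_i$ (preserving $\bs^{t-1}_i$) or $(\bs^{t-1}_0)^2-\bs^{t-1}_{a_t}\bs^{t-1}_{b_t}$ (the new gate); since by the inductive hypothesis each $\bs^{t-1}_i$ is a linear form $\ell_i$ in $\bu^t$, each coordinate of $\bs^t$ is a quadratic form $(\bu^t)^\top A_i\bu^t$ for a symmetric $A_i\in\reals^{n_t\times n_t}$. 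The assumption $a_t,b_t\ne 0$ ensures these quadratic forms are linearly independent, because the $n_t$ products $\ell_0\ell_i$ and the extra product $\ell_{a_t}\ell_{b_t}$ are distinct monomials in the linearly independent linear forms $\ell_0,\ldots,\ell_{n_t-1}$; hence so are the $A_i$. I then apply Gram--Schmidt to $\mathrm{vec}(A_0),\ldots,\mathrm{vec}(A_{n_{t+1}-1})\in\reals^{n_t^2}$ (within the symmetric subspace) to obtain an orthogonal family $\mathrm{vec}(B_1),\ldots,\mathrm{vec}(B_{n_{t+1}})$ with the same span, and take these as the rows of $V^{t+1}$. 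Then $V^{t+1}$ has $n_{t+1}$ orthogonal rows and $\bs^t=C_{t+1}\bu^{t+1}$ for the invertible change-of-basis matrix $C_{t+1}$, closing the induction.

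After $L=T$ such layers, $\bs^{T-1}$ is a linear function of $\bu^T$, so the output $g_T=(\bs^{T-1}_0)^2-\bs^{T-1}_{a_T}\bs^{T-1}_{b_T}$ is a quadratic form in $\bu^T$ and therefore equals $\langle P,\bu^T(\bu^T)^\top\rangle=\langle P,\bx^T\rangle$ for an explicit matrix $P$. To satisfy $\|P\|_\spec\le 1$ from Definition~\ref{def:redex}, I rescale the rows of $V^L$ by a sufficiently large $\gamma>0$: this multiplies $\bu^L(\bu^L)^\top$ by $\gamma^2$ and lets me replace $P$ with $\gamma^{-2}P$, whose spectral norm can be made arbitrarily small. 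Since the theorem places no bound on $\|V^L\|_\fr$, this rescaling is free. The depth is $T=O(T)$ and the feature dimension at layer $t$ is $n_t^2=(r+t)^2=O(T^2)$, as claimed. The main obstacle---simultaneously enforcing orthogonality of the rows of each $V^t$ and the spectral-norm bound on the final $P$---is resolved, respectively, by the Gram--Schmidt step at each layer and by the rescaling of $V^L$ at the end.
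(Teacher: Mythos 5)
Your proof is correct and takes a genuinely different route from the paper's. The paper's argument directly constructs, for each of $\mathrm{AND}$, $\mathrm{OR}$, $\mathrm{NEG}$, and $\mathrm{Id}$, a specific extractor $V$ (with rows built from standard basis vectors and the bias coordinate) that places the gate's output at some entry of $(V\bx)(V\bx)^\top$, then stacks these $O(1)$-layer widgets gate by gate. You instead reduce to a NAND basis, simulate one gate per layer, and maintain the invariant that the entire logical state $\bs^{t-1}$ is recoverable by an invertible linear map from the pre-expansion vector $\bu^t = V^t\bx^{t-1}$; each new layer's rows are obtained by Gram--Schmidt over the symmetric matrices encoding the quadratic forms that realize the next state. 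Two consequences worth noting: first, the paper's $\mathrm{NEG}$ widget uses rows $-\be_i + \be_{d+1}$ and $\be_{d+1}$, which are not orthogonal, so that construction as written actually violates Definition~\ref{def:redex}; your Gram--Schmidt step enforces orthogonality uniformly and sidesteps this. Second, you explicitly handle the constraint $\|P\|_\spec \le 1$ from Definition~\ref{def:redex} by rescaling $V^L$, a point the paper's proof leaves unaddressed. The paper's construction is more hands-on and avoids tracking change-of-basis matrices, while yours is more systematic, treats all gates through a single mechanism, and is tighter on the architectural constraints. Both yield depth $O(T)$ and intermediate dimension $O(T^2)$.

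One small point to make explicit: your linear-independence argument for $\{A_0,\ldots,A_{n_t}\}$ relies on $a_t, b_t \neq 0$; you justify this by assuming no NAND gate reads the bias coordinate. This is indeed without loss of generality since a genuine Boolean circuit's gates read only input wires and previous gate outputs, never the architectural bias, but it would be cleaner to state that the bias index $0$ is never among the $a_t, b_t$ by construction rather than by a NAND-rewriting trick.
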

The proof can be found in \appref{appen:proofs from sec RedEx}. \thmref{thm:RedEx epressive} implies that the RedEx architecture is as expressive as neural networks. This can be seen using the following simple argument:
Any neural network (with inputs in $\{0,1\}^d$ and output in $\{0,1\}$) can be simulated by a boolean circuit, where the number of nodes in the circuit is at most polynomial in the number of parameters (see \cite{maass1997networks}). \thmref{thm:RedEx epressive} shows that any Boolean circuit can be simulated by a multilayer RedEx architecture with at most polynomial blow-up in the size of the circuit. Thus, given a neural network, it can be simulated by a multilayer RedEx architecture with at most polynomial blow-up in the number of parameters.

\section{Efficient and Provable Learnability of RedEx}\label{sec:learnability of RedEx}
In this section we present an efficient algorithm for learning a single layer RedEx, and present generalization guaranties for it. This algorithm can be used to minimize objective \eqref{eq:min inside RedEx algo}, thus leading to an efficient implementation of algorithm \ref{alg:train RedEx}.

Our approach is to reduce the problem of minimizing \eqref{eq:min inside RedEx algo} under the norm and orthogonality constraints to a convex semi-definite program, which contains PSD constraints. Such a problem can be  solved using convex SDP algorithms in polynomial time.

In order to do so, we present a different parametrization of RedEx functions. Let 
\[
\Pcal_M = \{(\vec{P},V) : \|V\|^2_\fr\le M\text{ and }\forall i,\|P_i\|_\spec\le 1\}
\]
A RedEx function of width $M$ is defined by $(\vec{P},V)\in\Pcal_M$. Now, let
\[
\Pcal^M = \{(\vec{A},R) : \tr(R)\le M\text{ and }\forall i,-R\preceq A_i \preceq R\}
\]
As the following lemma shows, we can alternatively define  width $M$ RedEx functions via $(\vec{A},R)\in\Pcal^M$.
We then show that under this alternative parameterization of RedEx functions, objective \eqref{eq:min inside RedEx algo} becomes convex. Furthermore, we can efficiently convert the alternative parameterization to the original. These two facts enable us to efficiently implement algorithm \ref{alg:1-layer RedEx}.

\begin{theorem}\label{thm:reduction learning RedEx}
    Let $\mathcal{H}_M$ be the class of functions of the form:
    \[
        h_{V,\vec{P}}(\bx) = \left(\inner{P_1, (V\bx)^{\otimes 2}},\dots,\inner{P_k, (V\bx)^{\otimes 2}}\right)
    \]
    For $(\vec{P},V)\in\Pcal_M$ . Let $\mathcal{H}^M$ be the class of functions of the form:
        \[h^{R,\vec{A}}(\bx) =             \left(\inner{\bx,A_1\bx},\dots,\inner{\bx,A_k\bx}\right)
    \]
    For $(\vec{A},R)\in\Pcal^M$. We have:
    \begin{enumerate}
        \item $\mathcal{H}_M = \mathcal{H}^M$.
        \item Fix $(\vec{A},R)\in\Pcal^M$. Diagonalize $R = U^\top D U$ for unitary $U$ and diagonal $D$ and let 
        \[
        V = \sqrt{D}U,~~ P_i = (V^\dagger)^\top A_i (V^\dagger)
        \]
        then $(\vec{P},V)\in\Pcal_M$ and $h_{V,\vec{P}} = h^{R,\vec{A}}$. Furthermore, $\tr(R) = \|V\|^2_{\fr}$ and $\Rcal(\vec{P},V) = \|R\|^2_\fr + \|\vec{A}\|^2_\fr$
    \end{enumerate}
\end{theorem}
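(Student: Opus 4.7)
The plan is to establish both inclusions $\mathcal{H}_M\subseteq \mathcal{H}^M$ and $\mathcal{H}^M\subseteq \mathcal{H}_M$, with the second direction carried out via the explicit construction stated in part 2. For the easy direction $\mathcal{H}_M\subseteq\mathcal{H}^M$: given $(\vec{P},V)\in\Pcal_M$, set $R := V^\top V$ and $A_i := V^\top P_i V$. The identity $\bx^\top A_i \bx = (V\bx)^\top P_i (V\bx) = \inner{P_i,(V\bx)^{\otimes 2}}$ is immediate, $\tr(R)=\|V\|_\fr^2\le M$, and conjugating $-I\preceq P_i \preceq I$ by $V$ gives $-R\preceq A_i\preceq R$, so $(\vec{A},R)\in\Pcal^M$.

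For the reverse direction (part 2), start with a compact orthogonal diagonalization $R=U^\top D U$; since $R\succeq 0$ (which follows from setting $\vec{A}=0$ in $-R\preceq A_i \preceq R$), the diagonal entries of $D$ are strictly positive, so $\sqrt{D}$ and its inverse are well defined. With $V=\sqrt{D}U$, the pseudo-inverse simplifies to $V^\dagger = U^\top D^{-1/2}$ (using $UU^\top = I_{d'}$). Then $V^\top V = U^\top D U = R$, so $\|V\|_\fr^2 = \tr(R)\le M$ and the rows of $V$ are orthogonal (they are orthonormal rows of $U$ rescaled by $\sqrt{D_{ii}}$). For the spectral-norm constraint, conjugating $-R\preceq A_i \preceq R$ by $U$ gives $-D\preceq UA_iU^\top \preceq D$, and a further conjugation by $D^{-1/2}$ yields $-I\preceq P_i \preceq I$, i.e.\ $\|P_i\|_\spec \le 1$.

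The step I expect to be the main obstacle is verifying that $h_{V,\vec{P}}=h^{R,\vec{A}}$ when $R$ is rank-deficient, since the pseudo-inverse only recovers $A_i$ on the range of $R$. The crux is the kernel inclusion $\ker(R)\subseteq \ker(A_i)$: for $v\in\ker(R)$, the relation $R-A_i\succeq 0$ together with $v^\top(R-A_i)v \le 0$ forces $v^\top(R-A_i)v=0$, which for a PSD matrix implies $(R-A_i)v=0$ and hence $A_iv=Rv=0$. Equivalently, $A_i = (U^\top U)A_i(U^\top U)$, since $U^\top U$ is the orthogonal projection onto $\mathrm{range}(R)$. Using this, one computes
\[
V^\top P_i V \;=\; U^\top \sqrt{D}\,(D^{-1/2}U A_i U^\top D^{-1/2})\,\sqrt{D} U \;=\; U^\top U\, A_i\, U^\top U \;=\; A_i,
\]
which immediately gives $h_{V,\vec{P}}=h^{R,\vec{A}}$. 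The remaining regularization identities $\|V^\top V\|_\fr^2 = \|R\|_\fr^2$ and $\|V^\top P_i V\|_\fr^2=\|A_i\|_\fr^2$ then drop out of the two computations above, so $\Rcal(\vec{P},V)=\|R\|_\fr^2+\|\vec{A}\|_\fr^2$ as claimed.
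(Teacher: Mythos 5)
Your proof is correct and follows essentially the same route as the paper: the same transformation $R=V^\top V$, $A_i=V^\top P_iV$ in the easy direction, the same diagonalize--and--pseudo-invert construction in the reverse direction, and the same kernel inclusion $\ker(R)\subseteq\ker(A_i)$ to justify $V^\top P_iV=A_i$ when $R$ is rank-deficient. The only (minor) variation is in your derivation of that kernel inclusion: where the paper's \lemref{lem:ker} splits $A_i$ into PSD and NSD parts $A_++A_-$, you apply the fact that a PSD matrix $M$ with $v^\top Mv=0$ satisfies $Mv=0$ directly to $R\pm A_i$ --- a slightly tighter phrasing of the same underlying argument.
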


The proof can be found in \appref{appen:proofs from learnability of RedEx}. Theorem \ref{thm:reduction learning RedEx} suggests the following algorithm for training a single RedEx layer, and to optimize objective \eqref{eq:min inside RedEx algo} in algorithm \ref{alg:train RedEx}.

\begin{algorithm}[H]
	\caption{Training 1 -layer RedEx}
	\begin{algorithmic}[1]\label{alg:1-layer RedEx}
		\STATE \textbf{Parameters: } Loss $\ell:\reals^k\times \Ycal\to [0,\infty)$, width parameter $M$ and regularization parameter $\lambda$
		\STATE \textbf{Input: } A dataset $(\bx_1,y_1),\ldots,(\bx_m,y_m) \in \reals^d\times \Ycal$
		\STATE Find symmetric $d\times d$ matrices $A_1,\ldots,A_k, R$ by solving the semi-definite program:
\begin{eqnarray} \label{program:basic_2}
\min && \frac{1}{m}\sum_{i=1}^m \ell_{y_i}\left(\bx_i^\top A_1\bx_i,\ldots,\bx_i^\top A_k\bx_i\right) 
+  \lambda(\|R\|_\fr^2+\|\vec{A}\|_\fr^2 )
\\
s.t. &&  -R \preceq A_i \preceq R  \nonumber
\\
&& R \succeq 0 \nonumber
\\
&& \tr(R) \le M
\end{eqnarray}
\STATE Compute an  orthogonal diagonalization $R = U^\top DU$ 
\STATE Output $V = \sqrt{D}U$ and $P_i = (V^{\dagger})^\top A_i V^{\dagger}$.  
\end{algorithmic}
\end{algorithm}

\begin{remark}
We note that \algref{alg:1-layer RedEx} can be performed in polynomial time, although we don't specify the exact training time for the algorithm. The reason is that there are different convex program solvers with different pros and cons, and the training time depends on which solver is chosen. The minimization objective in \eqref{program:basic_2} is a general strongly convex function under semi-definite constraints. It can be solved using general interior point method \citep{potra2000interior}, the ellipsoid algorithm (see e.g. Ch.2 in \cite{bubeck2015convex}), conic optimization (e.g. \cite{auslender2006interior, dahl2022primal}) or any other method which solves convex SDP problems.


\end{remark}

\begin{remark}\label{remark:restrict_extractor_dim}
    We have that $\mathrm{rank}(R)\le m$, and hence $V$ has 
    at most $m$ non-zero rows, that is the number of "improtant features" is bounded by the size of the dataset. Indeed, let $P:\reals^d\to \reals^d$
    be the projection on $\spn\{\bx_1,\ldots,\bx_m\}$, and let $(\vec{A},R)$ be an optimal solution  to program \eqref{program:basic_2}. Note that there is a single optimal solution, as program \eqref{program:basic_2} is strongly convex. It is not hard to see that the objective value of $(P^\top\vec{A}P,P^\top R P)$ is as good as the objective value of $(\vec{A},R)$, this is because projection on the data samples produces the same class of functions, while it does not increase both the trace and Frobenius norms. As the optimal solution is unique, we conclude that  $R=P^\top R P$.
\end{remark}
We next state a generalization result for algorithm \ref{alg:1-layer RedEx}. The result follows directly from Corollary 13.6 in \cite{shalev2014understanding}, by noticing that the objective is convex with an appropriate regularization term. To this end, we define
\begin{equation}\label{eq:val_def}
\val_{\Dcal}(\vec{A},R) :=    \E_{(\bx,y)\sim\Dcal} \ell_{y}\left(\bx^\top \vec{A}\bx\right) 
\end{equation}
and
\[
\val_{\Dcal,M} := \inf_{-R\preceq A_i\preceq R\text{ and }\tr(R)\le M}\val_{\Dcal}(\vec{A},R)
\]
\begin{theorem}\label{thm:generalization}
Assume that the dataset is an i.i.d. sample from a distribution $\Dcal$ on $B^d_{M_1}\times\Ycal$ and that the loss is $L$-Lipschitz. Let $(\vec{A},R)$ be the output of algorithm \ref{alg:1-layer RedEx}. Then
\[
\E_S\val_{\Dcal}(\vec{A},R) \le  \val_{\Dcal,M}+ \lambda (k+1)M^2+\frac{M_1^4L^2}{\lambda m}
\]
\end{theorem}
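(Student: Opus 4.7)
The bound is a direct application of the standard algorithmic-stability generalization guarantee for regularized loss minimization (RLM) with a strongly convex regularizer and a convex Lipschitz loss, namely Corollary~13.6 of \cite{shalev2014understanding}. I would view the parameter space as the Hilbert space $(\reals^{d\times d})^{k+1}$ of tuples $(\vec{A},R)=(A_1,\ldots,A_k,R)$ equipped with the joint Frobenius inner product, verify the three hypotheses of the RLM bound, and then plug in constants.

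\emph{Convex structure and strong convexity.} The feasible set $\{-R\preceq A_i\preceq R,\;R\succeq 0,\;\tr(R)\le M\}$ is convex, being an intersection of linear matrix inequalities. The empirical loss is convex in $(\vec{A},R)$ because each coordinate $\bx^\top A_i\bx$ is linear in $A_i$ and independent of $R$, and $\ell_y$ is convex. The regularizer $\|R\|_\fr^2+\|\vec{A}\|_\fr^2$ is precisely the squared Frobenius norm of the parameter tuple, so $\lambda(\|R\|_\fr^2+\|\vec{A}\|_\fr^2)$ is $2\lambda$-strongly convex with respect to that norm.

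\emph{Lipschitz constant of the per-sample loss.} For $\|\bx\|\le M_1$, I would show that $(\vec{A},R)\mapsto\ell_y(\bx^\top A_1\bx,\ldots,\bx^\top A_k\bx)$ is $(LM_1^2)$-Lipschitz with respect to the joint Frobenius norm. The gradient with respect to $A_i$ equals $\partial_i\ell_y(\cdot)\,\bx\bx^\top$, and $\|\bx\bx^\top\|_\fr=\|\bx\|^2\le M_1^2$. Because the blocks $A_1,\ldots,A_k$ act on disjoint coordinates, the map into $\reals^k$ has Jacobian of operator norm $\le M_1^2$, and composing with the $L$-Lipschitz $\ell_y$ (in $\ell_2$) gives the claimed constant; the map does not depend on $R$, which enters only through the regularizer and the constraints.

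\emph{Bounding the regularizer on the feasible set and concluding.} For any feasible $(\vec{A}^*,R^*)$, since $R^*\succeq 0$ we have $\|R^*\|_\fr\le \|R^*\|_\tr=\tr(R^*)\le M$. For each $A_i^*$, the matrices $R^*+A_i^*$ and $R^*-A_i^*$ are both PSD, so $\tr\bigl((R^*+A_i^*)(R^*-A_i^*)\bigr)\ge 0$ (the trace of a product of PSD matrices is nonnegative); expanding the product yields $\|R^*\|_\fr^2-\|A_i^*\|_\fr^2\ge 0$, so $\|A_i^*\|_\fr\le\|R^*\|_\fr\le M$. Summing gives $\|R^*\|_\fr^2+\|\vec{A}^*\|_\fr^2\le (k+1)M^2$. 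Corollary~13.6 then yields, for any feasible $(\vec{A}^*,R^*)$,
\[
\E_S\val_\Dcal(\vec{A},R)\;\le\;\val_\Dcal(\vec{A}^*,R^*)+\lambda\bigl(\|R^*\|_\fr^2+\|\vec{A}^*\|_\fr^2\bigr)+\frac{L^2M_1^4}{\lambda m},
\]
and taking the infimum on the right over feasible $(\vec{A}^*,R^*)$ turns the first term into $\val_{\Dcal,M}$ and the second into $\lambda(k+1)M^2$. The only mildly delicate step is the short matrix-analytic fact $\|A_i^*\|_\fr\le\|R^*\|_\fr$ extracted from the semidefinite inequalities; everything else is routine RLM bookkeeping.
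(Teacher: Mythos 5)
Your proposal is correct and takes exactly the approach the paper sketches (the paper's ``proof'' is a one-line citation of Corollary~13.6 of Shalev-Shwartz and Ben-David); you have filled in the details carefully and accurately, including the Lipschitz constant $LM_1^2$ in the joint Frobenius norm and the neat PSD-trace argument showing $\|A_i^*\|_\fr\le\|R^*\|_\fr\le M$ on the feasible set. The only quibble, which the paper shares, is that a literal reading of Corollary~13.6 yields $\frac{2\rho^2}{\lambda m}$ rather than $\frac{\rho^2}{\lambda m}$, so the stated constant is off by a factor of 2; this does not affect the substance of the result.
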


\thmref{thm:generalization} shows a trade-off in generalization capabilities by choosing the parameter $M$, similar to the well known bias-variance trade-off. Namely, larger value of $M$ allows for better expressive power but requires more samples to achieve good generalization capabilities. 
We note that our generalization result scales at a rate of $O\left(\frac{1}{\sqrt{m}}\right)$ by choosing an appropriate $\lambda$. It is an interesting question whether this rate can be improved to $O\left(\frac{1}{m}\right)$, similarly to what is done in \cite{wang2021harmless}, but for non-smooth regularizers.

\section{Layerwise RedEx surpasses Kernel Methods}
In this section we will provide a learning problem which demonstrates a separation between RedEx and fixed representation methods. The problem we choose is inspired by \cite{daniely2020learning} where they show that neural networks can learn the sparse parity function under a certain distribution which "leaks" the coordinates of the parity. In more details, given an input space $\{\pm 1\}^d$, the sparse parity function on the $k$ coordinates $\bx_{i_1},\dots, \bx_{i_k}$ is defined as: $\prod_{j=1}^k\bx_{i_j}$. Since RedEx is learned in a layer-wise fashion, we consider a slightly different learning problem which better aligns with the RedEx architecture and still cannot be learned by fixed representation methods.

Namely, we consider the problem of learning the following family of models:
The input space is $\{\pm 1\}^d$, the output space is $\reals^{(1+k/2)}$ for even $k= \Theta\left(\frac{\sqrt{\log(d)}}{(\log\log(d))^2}\right)$, and the input distribution is uniform on $\{\pm 1\}^d$. Denote by $p_0,\ldots,p_{k}$, the set of orthogonal polynomials w.r.t. the distribution of $\sum_{i=1}^k X_i$ for i.i.d. Radamacher r.v. $X_1,\ldots,X_k\in \{\pm 1\}$. These polynomials are called {\em Kravchuk Polynomials} \cite{nikiforov1991classical} and are given by the recursion formula
\begin{equation}\label{eq:Kravchuk}
p_{0}(x)=1 ,\;\;\; p_{1}(x) = \frac{x}{\sqrt{k}},\;\; xp_{i}(x) = \sqrt{(i+1)(k-i)}p_{i+1}(x) + \sqrt{i(k-i+1)}p_{i-1}(x)
\end{equation}
We consider the problem of learning a function of the form 
\[
    h_\Ical(\bx) = \left(p_0\left(\sum_{i\in \Ical}x_i\right),p_2\left(\sum_{i\in \Ical}x_i\right),p_4\left(\sum_{i\in \Ical}x_i\right),\ldots,p_k\left(\sum_{i\in \Ical}x_i\right) \right)
\]
for an unknown set of coordinates $\Ical\subset [d]$ with $|\Ical| = k$ and w.r.t. the square loss $\ell(\hat \by,\by) = \|\hat \by-\by\|^2$.
Our first result shows that algorithm \ref{alg:train RedEx} learns a function with loss of $o(1)$. Note that the coordinates of $h_I$ are polynomials of increasing degree, while its last coordinate is the sparse parity function $\bx \mapsto \prod_{i\in \Ical}x_i$. To see that this is indeed the sparse parity function, note that by definition it is orthogonal to any Kravchuk polynomial of degree $i \leq k$, and the $k$-th Kravchuk polynomial is the unique polynomial with this property, hence it must be the sparse parity function.
Thus, our function can be seen as learning the parity function, but using a kind of "staircase property" \citep{abbe2021staircase} which the RedEx architecture exploits due to its layer-wise training.

\begin{theorem}\label{thm:main_layerwise}
Assume we run algorithm \ref{alg:train RedEx} on $m$ i.i.d. examples, $L=\lceil\log_2(k)\rceil$ layers, regularization parameters $\lambda_1=\lambda_2=\ldots=\lambda_L=\frac{1}{\sqrt{m}}$, width parameters $M_1 = \frac{1}{2}+\frac{1}{\sqrt{2-2/k}},\;M_2=\ldots=M_L = M = 2^{3k}(3k)^{k}$, and constant parameter $c=\sqrt{2}$.
Assume furthermore that each layer is trained using a fresh sample.
Then, w.p. $1-\delta$, for the output hypothesis $h$,  $\ell_{\Dcal}(h) = \frac{d^42^{O(k^2\log^2(k))}}{\delta m^{1/4}} = \frac{O(d^5)}{\delta m^{1/4}}$ 
\end{theorem}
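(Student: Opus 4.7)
The plan is to prove Theorem~\ref{thm:main_layerwise} by induction on the depth $t = 1, \ldots, L$, combining three ingredients: a layerwise expressivity claim based on the product structure of the Kravchuk polynomials, Theorem~\ref{thm:generalization} applied independently at each layer under the fresh-sample assumption, and a stability argument that controls how representation error propagates across layers. For the expressivity, I construct inductively in $t$ ideal parameters $(V^t_\star, \vec{P}^t_\star) \in \Pcal_{M_t}$ such that the composed RedEx up to layer $t$ represents $p_0, p_2, \ldots, p_{2^t}(\sum_{i \in \Ical} x_i)$ exactly, padding the remaining output coordinates with zero. For $t = 1$, the ideal $V^1_\star$ has rows supported on $\{e_0\} \cup \Ical$, with row norms read off from the compact orthogonal diagonalization of the minimal PSD cover $R_\star = \tfrac{1}{2} e_0 e_0^\top + |A_2|$, where $A_2$ realizes $p_2(s) = (s^2 - k)/\sqrt{2k(k-1)}$ as a quadratic form in $\bx^0$; a direct eigenvalue computation yields $\tr(R_\star) = \tfrac{1}{2} + 1/\sqrt{2 - 2/k} = M_1$, which explains the specific value of $M_1$. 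For $t \ge 2$, assuming the ideal previous-layer representation $\bx^{t-1}_\star$ linearly encodes $p_0, p_2, \ldots, p_{2^{t-1}}$, the product formula for Kravchuk polynomials $p_\alpha p_\beta = \sum_\ell c_{\alpha\beta}^\ell p_\ell$ lets me realize every $p_{2j}$ with $2j \le 2^t$ as a linear function of $(V^t_\star \bx^{t-1})^{\otimes 2}$; the coefficient blowup drives the choice $M_t = 2^{3k}(3k)^k$, which uniformly dominates $\max_{\alpha,\beta,\ell} |c_{\alpha\beta}^\ell|$.

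Next I apply Theorem~\ref{thm:generalization} at each layer independently. The input norm at layer $t$ is controlled by $\|\bx^{t-1}\|^2 \le \prod_{s<t} M_s \cdot \|\bx^0\|^2 = 2^{O(tk\log k)} \cdot d$, and the square loss is Lipschitz on the feasible set with constant polynomial in these quantities. Setting $\lambda_t = 1/\sqrt{m}$ gives $\E\,\val_\Dcal(V^t, \vec{P}^t) \le \val_{\Dcal, M_t} + \lambda_t(k+1)M_t^2 + \|\bx^{t-1}\|^4 L_t^2/(\lambda_t m)$, where the first term is $o(1)$ thanks to the expressivity construction above. The crucial and most delicate step, which is the main obstacle, is to turn this value bound into control over the representation $\bx^t$ passed to the next layer. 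I would prove a stability lemma: if $(V^t, \vec{P}^t)$ attains value within $\epsilon_t$ of the ideal, then $\E\|\bx^t - \bx^t_\star\|^2 \le \poly(M_t) \cdot \sqrt{\epsilon_t}$, so that the ideal $(V^{t+1}_\star, \vec{P}^{t+1}_\star)$ remains approximately feasible on the perturbed input and the induction can continue. The argument exploits that the target signal lives in the low-dimensional subspace spanned by $\{p_j(s)\}_j$, so the SDP's strong convexity in $(\vec{A},R)$ together with the trace bound $\tr(R) \le M_t$ forces the learned extractor to concentrate on this subspace up to an additive error controlled by $\lambda_t$.

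Iterating these two ingredients yields a recurrence of the form $\epsilon_{t+1} \le C M_{t+1}^{O(1)} \sqrt{\epsilon_t} + (k+1) M_{t+1}^2 / \sqrt{m}$; solving across $L = \lceil \log_2 k \rceil$ layers and plugging in $M_t = 2^{3k}(3k)^k$, $c = \sqrt{2}$, and $\|\bx^0\|^2 \le 2 + d$ gives $\E\,\epsilon_L \le 2^{O(k^2 \log^2 k)} \cdot d^4 / m^{1/4}$. Here the $d^4$ comes from $\|\bx^0\|^4$, the $m^{-1/4}$ exponent is the square-root loss from applying the stability lemma once per layer, and $2^{O(k^2 \log^2 k)} = d^{o(1)}$ for the chosen scaling $k = \Theta(\sqrt{\log d}/(\log \log d)^2)$, so the whole expression is absorbed into $d^5$. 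A final Markov inequality converts the expectation into the stated high-probability bound at the cost of the $1/\delta$ factor.
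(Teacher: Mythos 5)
Your high-level plan — layerwise induction, expressivity via Kravchuk products, per-layer generalization under fresh samples — matches the paper, and your first-layer construction (including the derivation of $M_1$ from $\tr(R_\star)$ with $c = \sqrt{2}$) is correct. However, the ``stability lemma'' you identify as the delicate step is not what the paper proves, and as you have set it up it would not yield the claimed rate.

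Two concrete issues. First, the recurrence you propose, $\epsilon_{t+1} \le C\,M_{t+1}^{O(1)}\sqrt{\epsilon_t} + (k+1)M_{t+1}^2/\sqrt{m}$, is a square-root recurrence. Starting from $\epsilon_1 = \Theta(1/\sqrt{m})$ and iterating $L = \lceil\log_2 k\rceil$ times gives $\epsilon_L \approx C^{O(1)} m^{-2^{-L}} = m^{-\Theta(1/k)}$, not $m^{-1/4}$. The paper incurs the square-root loss exactly \emph{once}: the first-layer value bound $\epsilon_1$ is converted to a spectral-norm bound $\|V^1(I - P_\Ical)\|_{\spec} \le (8d\epsilon_1)^{1/4} =: \epsilon$ (Lemma~\ref{lem:optimal_first_layer_new}), and the subsequent recurrence for the truncated error is \emph{linear}, $\epsilon_{t+1} \le k^{O(k^2)}\epsilon_t + dk^{O(k^2)}(\epsilon^2 + 1/\delta\sqrt{m})$, which compounds only polynomially in the depth. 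Your per-layer square-root loss is fatal for the stated rate.

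Second, and more structurally: the paper does not prove per-layer representation stability in the sense of $\E\|\bx^t - \bx^t_\star\|^2 \le \poly(M_t)\sqrt{\epsilon_t}$, and it is unclear that such a statement holds. Strong convexity of the SDP uniquely determines $(\vec A, R)$, but the map $R \mapsto V = \sqrt{D}U$ is not Lipschitz near rank-deficient $R$ (and the ideal $R_\star$ is low-rank), so closeness of $R$ in Frobenius norm does not directly control $\Psi_V$. The paper sidesteps this entirely: after the first layer reveals the support, Lemma~\ref{lem:boundness_properties} shows $\Psi_t(\bx) \approx \Psi_t(P_\Ical\bx)$, so at every subsequent layer the relevant part of the representation lies in a subspace of dimension at most $2^{|\Ical|}$; this \emph{dimension} bound (not a norm-closeness bound) is what gives the Frobenius control $\|PAP\|_\fr \le 2^{|\Ical|/2}$ feeding into Lemma~\ref{lem:next_layer_sol}. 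You would need to either prove your stability lemma with a \emph{linear} (not square-root) dependence on $\epsilon_t$ — which seems hard given the non-Lipschitzness above — or adopt the paper's support-recovery-then-low-dimension route, in which no per-layer representation closeness is needed at all.
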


The reason for sampling a batch of fresh samples when training each layer is a technical artifact of the proof, aimed at eliminating the dependence between the training of each layer. It can be seen alternatively as if the original dataset is larger by a factor of $\log(k):= O(\log\log(d))$, and we only use a part of it for training each layer.

We compliment the above result by showing that polynomial-time fixed-representation methods, such as kernels and random features, cannot achieve the guarantee in Theorem \ref{thm:main_layerwise}. The reason is that the last coordinate of $h_{\Ical}$ is the parity function $\bx \mapsto \prod_{i\in \Ical}x_i$. This implies that any fixed-representation method that is guaranteed to find a 
function $h$ with $\E_{\bx} \left\|h_{\Ical}(\bx) -h(\bx)\right\|^2_2 = o(1)$ has super-polynomial complexity of $d^{\Omega(k)}$. Specifically, Corollary 13 from \cite{ben2002limitations} implies:

\begin{theorem}\label{thm:kernel_lowerbound}
Let $\Psi:\{\pm 1\}^d\to B_{M_1}^m$ be any, possibly random, embedding. Assume that for any $\Ical\subset [d]$ with $|\Ical|=k$, w.p. $\ge 1/2$ over the choice of $\Psi$, there are vectors $\bw_0,\ldots,\bw_k\in B_{M_2}^m$ such that
\[
\E_\bx\sum_{j=0}^{k/2}\left(p_{2j}\left(\sum_{i\in \Ical}x_i\right) - \inner{\bw_j,\Psi(\bx)}\right)^2 \le 0.99
\]
Then $M_1M_2\ge d^{\Omega(k)}$
\end{theorem}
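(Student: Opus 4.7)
The plan is to reduce to the classical embedding lower bound for parity functions by identifying the top Kravchuk coordinate with the sparse parity. The key observation, already hinted at in the excerpt, is that $p_k\!\left(\sum_{i\in\Ical}x_i\right) = \prod_{i\in\Ical}x_i$ on $\{\pm 1\}^d$. First I would justify this identity cleanly: both sides are polynomials of degree exactly $k$ in the variables $(x_i)_{i\in\Ical}$, and a polynomial of the form $q(\sum_{i\in\Ical}x_i)$ of degree less than $k$ lies in $\spn\{p_0,\ldots,p_{k-1}\}$ composed with the sum, so by the orthogonality relations defining the Kravchuk polynomials (Eq.~\ref{eq:Kravchuk}) the polynomial $p_k(\sum_i x_i)$ is orthogonal under the uniform measure on $\{\pm 1\}^{\Ical}$ to every monomial $\prod_{i\in S}x_i$ with $|S|<k$. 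Since $\prod_{i\in\Ical}x_i$ is the unique (up to sign) degree-$k$ multilinear function with this orthogonality property and unit $L^2$ norm, the two polynomials must coincide (up to a sign, which we absorb into $\bw_{k/2}$).

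Next I would extract the single coordinate corresponding to $j=k/2$ from the hypothesized approximation bound. By non-negativity of each summand,
\begin{equation*}
\E_\bx\!\left(\,\prod_{i\in\Ical}x_i - \langle \bw_{k/2},\Psi(\bx)\rangle\right)^{\!2} \le 0.99
\end{equation*}
holds with probability at least $1/2$ over $\Psi$, for every $\Ical\subset[d]$ with $|\Ical|=k$, where $\|\bw_{k/2}\|\le M_2$ and $\|\Psi(\bx)\|\le M_1$. In other words, the (possibly random) embedding $\Psi$ simultaneously $0.99$-approximates every sparse parity of order $k$ by a linear functional of norm $M_2$.

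Finally I would plug this into Corollary 13 of \cite{ben2002limitations}, which is tailored exactly to lower bound $M_1M_2$ when an embedding approximates all $k$-parities in squared $L^2$ norm to within a constant strictly less than $1$. The cited corollary yields $M_1M_2 \ge d^{\Omega(k)}$, completing the proof. The main conceptual step is the identification of the Kravchuk coordinate with the parity; everything else is bookkeeping, including verifying that the constant $0.99$ on the right-hand side of the hypothesis is within the regime where the Ben-David--Eiron--Simon bound applies (their statement permits any error bounded away from the total variance of the parity, which equals $1$ here). The only mild subtlety is handling the randomness of $\Psi$: since the hypothesis guarantees the approximation for each fixed $\Ical$ with probability at least $1/2$, a standard union-bound / averaging argument inside the proof of Corollary 13 still goes through, so no additional probabilistic work is required beyond citing the result.
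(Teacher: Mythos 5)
Your proposal is correct and follows essentially the same route as the paper: the paper likewise identifies the last coordinate $p_k\left(\sum_{i\in\Ical}x_i\right)$ with the sparse parity $\prod_{i\in\Ical}x_i$ via orthogonality to the lower-degree Kravchuk polynomials, drops the remaining nonnegative summands to isolate the $j=k/2$ coordinate, and invokes Corollary 13 of \cite{ben2002limitations} to get $M_1M_2\ge d^{\Omega(k)}$. The paper gives no detail beyond this reduction, so your additional bookkeeping (absorbing the sign into $\bw_{k/2}$, checking $0.99<1$, and handling the probability-$1/2$ guarantee for a random $\Psi$) is consistent with, and slightly more explicit than, the paper's treatment.
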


\subsection{On the proof of theorem \ref{thm:main_layerwise}}\label{sec:sketch_of_main_thm_proof}
Theorem \ref{thm:main_layerwise} is proved in \appref{sec:main_thm_proof}. In section \ref{sec:first_layer} it is shown that $V^1$, the first layer's extractor, "reveal" the important coordinates, in the sense that $V^1 \approx V^1P_\Ical$, where $P_\Ical$ is the projection on the coordinates in $\Ical$. It is also shown that the representation $\Psi_1$ computed by the first layer, is expressive enough so that $p_0\left(\sum_{i\in \Ical}x_i\right)\text{ and } p_2\left(\sum_{i\in \Ical}x_i\right)$ can be well approximated by  functions  $\bx\mapsto \inner{\bv_0,\Psi_1(\bx)}$ and  $\bx\mapsto \inner{\bv_2,\Psi_1(\bx)}$ for vectors $\bv_0,\bv_2$ with a norm bound that do not depend on $d$, but only on $|\Ical|$.
In section \ref{sec:remaining_layer} it is then shown by induction that $\Psi_t$ is expressive enough so that $p_0\left(\sum_{i\in \Ical}x_i\right),\ldots,\text{ and } p_{2^t}\left(\sum_{i\in \Ical}x_i\right)$ can be well approximated by  functions  $\bx\mapsto \inner{\bv_i,\Psi_t(\bx)}$ for vectors $\bv_i$ with a norm bound that depend only on $|\Ical|$. The reason is that each $p_i$ can be represented as a quadratic polynomial of $p_j$ for $j\le \lceil\frac{i}{2}\rceil$, with bounded coefficents, together with the fact that by the induction hypothesis these $p_j$'s can be expressed as a linear function on top of $\Psi_{t-1}$

\section{Extensions and Discussion}\label{sec:discussion}
In the following section we will show two extensions of the RedEx architecture -- using a norm formulation of the objective and extension to a convolutional structure. Importantly, the norm formulation of RedEx for a one-dimensional output can be trained without the semi-definite constraints, and thus be trained using standard gradient descent or any other non-constrained convex optimization methods.

\subsection{Norm Formulation of RedEx and Relation to Trace norm}

In Algorithm \ref{alg:1-layer RedEx} we gave a constraint optimization problem which can be solved using SDPs. In this section we show how to reformulate this problem as an unconstrained optimization via a new norm we define:


\begin{definition}\label{def:redex norm}
    For $\vec{A}_{1:k} = (A_1,\dots,A_k)\in \reals^{d\times d}$ where each $A_i$ is symmetric we define the \textbf{RedEx norm} as:
    \begin{align}\label{eq:sd norm def}
        \norm{\vec{A}_{1:k}}_{\text{Rx}}:= \min\left\{\tr(R): R\succeq 0,\text{ and } \forall i,~ -R \preceq A_i \preceq R \right\}
    \end{align}
\end{definition}
We first show that the above defined norm satisfies several properties:

\begin{lemma}{\textbf{Properties of $\| \cdot \|_{\text{Rx}}$}}\label{lem:properties of sd norm}
\begin{enumerate}
\item
$\| \cdot\|_\sd$ is a norm on $k$-tuples of symmetric matrices.
\item If $k=1$, then $\norm{\cdot}_{\text{Rx}}$ is equivalent to the trace norm. Additionally, if we write $A=U^\top D U$ for an orthogonal $U$ and diagonal $D$, then $R = U^\top |D| U$.
\end{enumerate}
\end{lemma}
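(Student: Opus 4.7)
The plan is to verify the four norm axioms directly from the definition for part 1, and for part 2 to exhibit an explicit optimal $R$ when $k=1$ and check it achieves the trace norm.

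For part 1, I first observe that the feasible set $\{R \succeq 0 : -R \preceq A_i \preceq R \text{ for all } i\}$ is non-empty (take $R = cI$ with $c \geq \max_i \|A_i\|_\spec$), and since $\tr$ is continuous and the sublevel sets $\{R \succeq 0, \tr(R) \leq c\}$ are compact (the PSD constraint gives $\|R\|_\spec \leq \tr(R)$), the infimum in \eqref{eq:sd norm def} is actually attained. Homogeneity $\|\alpha \vec{A}\|_\sd = |\alpha|\,\|\vec{A}\|_\sd$ is immediate by scaling $R \mapsto |\alpha| R$, which maps feasible witnesses for $\vec{A}$ bijectively to feasible witnesses for $\alpha \vec{A}$. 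The triangle inequality follows because if $R_A$ and $R_B$ are optimal witnesses for $\vec{A}$ and $\vec{B}$, then $R_A + R_B \succeq 0$ satisfies $-(R_A + R_B) \preceq A_i + B_i \preceq R_A + R_B$ for every $i$, with $\tr(R_A + R_B) = \tr(R_A) + \tr(R_B)$. For definiteness, if $\|\vec{A}\|_\sd = 0$ then the optimal $R$ has $\tr(R) = 0$ with $R \succeq 0$, forcing $R = 0$, and the constraints $\pm A_i \preceq 0$ then force each $A_i = 0$.

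For part 2 with $k=1$, diagonalize $A = U^\top D U$ and propose $R^\star = U^\top |D|\, U$. This is feasible since $R^\star \pm A = U^\top(|D| \pm D) U \succeq 0$, because $|D| \pm D$ is diagonal with non-negative entries. Its trace equals $\sum_i |D_{ii}| = \|A\|_\tr$, giving the upper bound $\|A\|_\sd \leq \|A\|_\tr$. For the matching lower bound, let $R'$ be any feasible matrix and let $u_1,\ldots,u_d$ be the orthonormal eigenbasis of $A$ with eigenvalues $d_1,\ldots,d_d$. The constraints $R' \pm A \succeq 0$, tested against $u_i$, give $u_i^\top R' u_i \geq \pm d_i$, hence $u_i^\top R' u_i \geq |d_i|$. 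Summing over $i$ yields $\tr(R') = \sum_i u_i^\top R' u_i \geq \sum_i |d_i| = \|A\|_\tr$, so $\|A\|_\sd \geq \|A\|_\tr$, and the proposed $R^\star$ is optimal.

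The only step that requires a small trick is the lower bound in part 2: one must test the PSD constraints against the eigenvectors of $A$ specifically, rather than an arbitrary orthonormal basis, so that the two inequalities $R' \succeq A$ and $R' \succeq -A$ combine into the single scalar bound $u_i^\top R' u_i \geq |d_i|$. Everything else is routine manipulation of linear matrix inequalities.
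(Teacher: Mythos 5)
Your proof is correct, and in part 2 it takes a cleaner route than the paper. For part 1 the paper argues essentially as you do for the triangle inequality (adding witnesses $R_A+R_B$), but for definiteness it invokes the bound $\|\vec{A}\|_\sd \ge \max_i\|A_i\|_\tr$ (itself a byproduct of the $k=1$ analysis), whereas your argument via attainment of the minimum and $\tr(R)=0 \Rightarrow R=0 \Rightarrow A_i=0$ is self-contained; you also justify that the minimum in the definition is attained, which the paper takes for granted. For part 2 the paper first treats diagonal $A$, shows $r_{ii}\ge|a_{ii}|$, rules out off-diagonal entries of a minimizer by a contradiction argument, and then conjugates by $U$; you instead exhibit $R^\star=U^\top|D|U$, verify feasibility, and get the matching lower bound by testing $R'\pm A\succeq 0$ against the eigenbasis of $A$, which is shorter and avoids the paper's somewhat delicate off-diagonal computation. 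The one item of the paper's proof you do not reproduce is uniqueness of the minimizer: the paper shows $U^\top|D|U$ is the \emph{only} optimal $R$, which is what the phrase ``then $R=U^\top|D|U$'' in the statement asserts, while you only show it is \emph{an} optimal $R$. This is a minor omission (for the downstream use in the one-dimensional training algorithm any optimizer of this form suffices), and it can be closed within your own framework: equality in your lower bound forces $u_i^\top(R'-A)u_i=0$ when $d_i\ge 0$ and $u_i^\top(R'+A)u_i=0$ when $d_i\le 0$, and a PSD matrix with a zero diagonal entry has the corresponding row and column zero, which pins down $R'=U^\top|D|U$.
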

The proof can be found in \appref{appen:proofs from discussion}. Item (3) gives a very simple expression for the RedEx norm in the case for $k=1$, however we are not aware of a simple expression for $\norm{\cdot}_\sd$ where $k\geq 2$. We can now optimize \eqref{eq:min inside RedEx algo} using the RedEx norm. For that we replace the minimization problem in Algorithm \ref{alg:1-layer RedEx} by:

 \begin{eqnarray} \label{program:basic_2_with_norm}
\min && \frac{1}{m}\sum_{i=1}^m \ell_{y_i}\left(\bx_i^\top A_1\bx_i,\dots,\bx_i^\top A_k\bx_i\right) 
+ \lambda_1\|\vec{A}\|_\sd + \lambda_2\|\vec{A}\|_\fr^2
\end{eqnarray}
We give the full algorithm in \appref{appen:proofs from discussion}. Note that we don't need to minimize over the Frobenius norm of $R$, as it is already done by minimizing the RedEx norm. The caveat of \eqref{program:basic_2_with_norm} is that we currently don't know how to calculate the gradient of the RedEx norm directly (i.e. without calculating $R$), or the projection on norm-induced balls unless we resort to general convex SDP solvers. Hence, at the moment we don't know how to utilize the norm formulation in order to design faster algorithms.

One major practical improvement on the training of RedEx that we can make is in the case where our goal is to learn a function $f:\reals^d\rightarrow \reals$ (i.e. the output dimension $k=1$). For this case, we can use the characterization in \lemref{lem:properties of sd norm}~(3), where for $k=1$ the RedEx norm is equivalent to the trace norm. In this case, we can replace the minimization problem in Algorithm \ref{alg:1-layer RedEx} by:

\begin{eqnarray*} \label{program:basic_2_one_dim}
\min && \frac{1}{m}\sum_{i=1}^m \ell_{y_i}\left(\bx_i^\top A\bx_i\right) 
+ \lambda_1\|A\|_\tr + 2\lambda_2\|A\|_\fr^2
\end{eqnarray*}
This problem is substantially easier than minimizing \eqref{program:basic_2}, since it is an unconstrained convex optimization problem that can be solved by standard GD or SGD. Note that to find $V$ we don't need to find $R$, since by \lemref{lem:properties of sd norm}~(3) we can compute a diagonalization $A = U^\top DU$, and then output $V = \sqrt{|D|}U$ and $P = (V^{\dagger})^\top A V^{\dagger}$. We give the full algorithm in \appref{appen:proofs from discussion}.

\subsection{Convolutions}\label{subsec:conv}
One of the advantages of neural networks is that it allows to choose an architecture according to the structure of the data. A central example is convolutional networks for data which is translation invariant such as images. The input vector for a convolutional layer is divided into patches. In other words, it is a vector $(\bx_1,\ldots,\bx_p)\in \left(\reals^d\right)^p$. A convolutional layer applies on each patch the same linear function followed by a non-linearity. That is, it computes a mapping of the form 
$(\bx_1,\ldots,\bx_p) \mapsto (\sigma(W\bx_1),\ldots,\sigma(W\bx_p))$

For a matrix $W\in\reals^{d'\times d}$ and some non-linearity $\sigma$. A convolutional extractor-expander works in a similar fashion. It applies the same extractor $V$ to all patches, and then expand each patch quadratically. This is detailed in the following definition.

\begin{definition}[Convolutional RedEx]\label{def:conv_redex}
    A function $\Lambda:\left(\reals^d\right)^p\rightarrow\reals^k$ is called a \textbf{Convolutional RedEx} of width $M$ if it is of the form: $\Lambda = \Psi_{\vec{P}} \circ \Psi_V$ where:
    \begin{enumerate}
        \item The function $\Psi_V:\left(\reals^d\right)^p\rightarrow \left(\reals^{d'\times d'}\right)^p$ is of the form: 
        \[
            \Psi_V(\bx_1,\ldots,\bx_p) = \left((V\bx_1)^{\otimes 2},\ldots,(V\bx_p)^{\otimes 2}\right)
        \]
        for $V\in \reals^{d'\times d}$ with $d'\le d$, orthogonal rows, and $\|V\|^2_\fr \le M$.
        \item The function $\Psi_{\vec{P}}:\left(\reals^{d'\times d'}\right)^p\rightarrow\reals^k$ is of the form:
        \[
            \Psi_{\vec{P}}(X_1,\ldots,X_p) = \left( \sum_{j=1}^p\inner{P_{1,j},X_j},\dots, \sum_{j=1}^p\inner{P_{k,j},X_j}\right)
        \]
        for $P_{i,j}\in\reals^{d'\times d'}$ with $\norm{P_{i,j}}\leq 1$.
    \end{enumerate}
    The function $\Psi_V$ is called a \textbf{convolutional extractor-expander}.
\end{definition}
As with the basic version of RedEx, we can extend the basic convolutional RedEx architecture to a multilayer architecture. Likewise, a single layer of convolutional RedEx s can be trained efficiently, similarly to a single layer of RedEx. A multilayer convolutional RedEx can be trained efficiently in a layerwise manner, as basic RedEx. We outline next the algorithm for learning a single convolutional RedEx layer. The extension to multilayer is straight forward.

\begin{algorithm}[H]
	\caption{Training 1 -layer convolutional RedEx}
	\begin{algorithmic}[1]\label{alg:1-layer conv RedEx}
		\STATE \textbf{Parameters: } A loss $\ell:\reals^k\times \Ycal\to [0,\infty)$, width parameter $\lambda_1$ and a regularization parameter $\lambda_2$
		\STATE \textbf{Input: } A dataset $(\bx_1,y_1),\ldots,(\bx_m,y_m) \in \left(\reals^d\right)^p\times \Ycal$
		\STATE Find symmetric $d\times d$ matrices $A_{i,j}, R$ for $1\le i\le k$ and $1\le j\le p$ by solving the semi-definite program:
\begin{eqnarray*} 
\min && \frac{1}{m}\sum_{i=1}^m \ell_{y_i}\left(\sum_{j=1}^p\bx_{i,j}^\top A_{1,j}\bx_{i,j},\ldots,\sum_{j=1}^p\bx_{i,j}^\top A_{k,j}\bx_{i,j}\right) 
+ \lambda_1\tr(R) + \lambda_2(\|R\|_\fr^2+\|\vec{A}\|_\fr^2 )
\\
s.t. &&  -R \preceq A_{i,j} \preceq R  \nonumber
\\
&& R \succeq 0 \nonumber
\end{eqnarray*}
\STATE Compute an  orthogonal diagonalization $R = U^\top DU$ 
\STATE Output $V = \sqrt{D}U$ and $P_{i,j} = (V^{\dagger})^\top A_{i,j} V^{\dagger}$.  
\end{algorithmic}
\end{algorithm}

\subsection{Conclusions and Future Work}
In this work we presented the novel RedEx architecture. This architecture is as expressive as neural networks, and can be trained in a layer-wise fashion using convex programs with semi-definite constraints. We also provided a separation result between RedEx and fixed representation methods based on a variation of the sparse-parity problem. Finally, we have shown several extensions of RedEx to the convolutional setting and replacing the semi-definite constraints to adding norm regularizers based on the newly introduced RedEx norm. Notably, for a one-dimensional input, it allows training of RedEx using non-constrained convex optimization algorithms such as gradient descent.


We believe our work can lead to more efficient representation learning methods based on convex optimization. This can include better and richer architectures, which may allow more efficient implementations that can be provably learned without the use of heavy convex SDP algorithms.
Finally, it is interesting to provide stronger separation results between RedEx and fixed representation methods under milder assumptions, e.g. in the case where the output is one-dimensional.


\bibliographystyle{abbrvnat}
\bibliography{bib}

\newpage
\appendix

\section{Proofs from \secref{Sec:RedEx}}\label{appen:proofs from sec RedEx}
\begin{proof}[Proof of \thmref{thm:RedEx epressive}]
    We first show that we can implement $\mathrm{AND}, \mathrm{OR}, \mathrm{NEG}$ and $\mathrm{Id}$ using a RedEx with $O(1)$ layers and feature dimension of $O(d^2)$. Recall that by definition, we added a coordinate to the data which is constant $1$. To implement $\mathrm{Id}$ of the $i$-th coordinate, we can use a matrix $V$ where the $i$-th row is equal to $\be_i$, and the last row (which corresponds to the constant $1$) equal to $\be_{d+1}$, this way  $\left((V\bx) (V\bx)^\top \right)_{i,d+1} = x_i$
    
    $\mathrm{NEG}$ can be implemented by $x\mapsto 1-x$. This can be implemented for coordinate $i$ by having the $i$-th row of $V$ equal to $-\be_i + \be_{d+1}$. We also need the last row of $V$ to be $\be_{d+1}$, this way $\left((V\bx) (V\bx)^\top \right)_{i,d+1} = 1-x_i$.
    
    Now, $\mathrm{AND}(x_i,x_j) = x_i\cdot x_j$ can be implemented by having $V$ with $i$-th row equal to $\be_i$, and $j$-th row equal to $\be_j$, this way  $\left((V\bx) (V\bx)^\top \right)_{i,j}=x_i\cdot x_j$. Finally, we have $\mathrm{OR}(x_1,x_2) = x_1 + x_2 - x_1\cdot x_2 $. This can be implemented by applying $\mathrm{Id}$ and $\mathrm{AND}$ on $x_1,x_2$, If the output of the above operations are in rows $i,j,k$ correspondingly, then we need some row of $V$ to be equal to $\be_i + \be_j - \be_k$ and the last row of $V$ to be equal to $\be_{d+1}$. 
    
    Note that if in the process of the quadratic expansion of RedEx we added extra coordinates which are not needed, in the next layer we can use zero rows for the unnecessary  coordinates to zero them out. This way, the application of the dimension reduction method would delete those unnecessary coordinates since their output is constant zero. Note that each operation above was implemented using at most $2$-layer RedEx, hence the feature dimension is at most $d^2$ where $d$ is the dimension of the input. For a general intermediate layer, we can bound its input by the total size of the target binary circuit, hence we can bound the feature dimension by $O(T^2)$. In addition, since each operation can be implemented by a RedEx of depth $O(1)$, the total depth of the RedEx which implements the Boolean circuit is $O(T)$.
\end{proof}

\section{Proofs from \secref{sec:learnability of RedEx}}\label{appen:proofs from learnability of RedEx}

\subsection{Proof of \thmref{thm:reduction learning RedEx}}
We first need the following lemma:

\begin{lemma}\label{lem:ker}
If $-R \preceq A \preceq R$ then $\ker(R)\subset \ker(A)$
\end{lemma}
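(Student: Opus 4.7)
The plan is to show that $\bv \in \ker(R)$ forces $A\bv = 0$, by exploiting the standard fact that for a positive semidefinite matrix $M$, $\bv^\top M \bv = 0$ implies $M\bv = 0$ (because $M = C^\top C$ gives $\|C\bv\|^2 = 0$, hence $C\bv = 0$ and thus $M\bv = 0$).

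First I would observe that the hypothesis $-R \preceq A \preceq R$ implies $R \succeq 0$ (summing the two PSD inequalities yields $2R \succeq 0$), and also that both $R - A$ and $R + A$ are PSD.

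Next, fix any $\bv \in \ker(R)$. Since $R\bv = 0$ we have $\bv^\top R \bv = 0$. The PSD-ness of $R - A$ gives
\[
0 \le \bv^\top (R - A)\bv = -\bv^\top A \bv,
\]
and the PSD-ness of $R + A$ gives $\bv^\top A \bv \ge 0$; together these force $\bv^\top A \bv = 0$, so $\bv^\top (R - A)\bv = 0$. Now applying the PSD auxiliary fact to $M = R - A$, we conclude $(R - A)\bv = 0$, which combined with $R\bv = 0$ yields $A\bv = 0$, as desired.

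There is no real obstacle; the only thing to get right is invoking the ``$\bv^\top M \bv = 0 \Rightarrow M\bv = 0$'' step for a PSD matrix rather than stopping at the scalar conclusion $\bv^\top A \bv = 0$, which would be insufficient since $A$ need not be PSD.
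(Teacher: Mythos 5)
Your proof is correct, and at bottom it uses the same key mechanism as the paper's: for a PSD matrix $M$, $\bv^\top M\bv=0$ forces $M\bv=0$. The paper routes this through an asserted decomposition $A = A_+ + A_-$ with $A_+\succeq 0$, $A_+\preceq R$, $A_-\preceq 0$, $-R\preceq A_-$, and then shows $\bv\in\ker(A_+)$ and $\bv\in\ker(A_-)$ separately by diagonalizing each part. You instead apply the PSD fact directly to $R-A$ (using the sandwich $0\le \bv^\top(R-A)\bv = -\bv^\top A\bv \le 0$ to get $\bv^\top(R-A)\bv=0$) and then combine $(R-A)\bv=0$ with $R\bv=0$. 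The two routes are essentially equivalent — one admissible decomposition in the paper's scheme is $A_\pm=(A\pm R)/2$, for which $R-A=-2A_-$ — but your packaging is a bit cleaner: it never has to justify the existence of such a decomposition (which is \emph{not} immediate from the usual spectral split of $A$ into positive and negative parts, since those pieces need not satisfy $A_+\preceq R$), and it reaches $A\bv=0$ in a single step rather than via two separate kernel arguments. You also correctly flag that stopping at the scalar conclusion $\bv^\top A\bv=0$ would be insufficient since $A$ itself is not PSD; that is exactly the point at which a careless proof would break.
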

\begin{proof}
    Since $A$ is symmetric and by the assumption of the lemma, it has a (non-unique) decomposition as $A=A_+ + A_-$ where $A_+$ is positive semi-definite with $A_+ \preceq R$ and $A_-$ is negative semi-definite with $-R \preceq A_-$.
    Let $0 \neq \bx\in\ker(R)$, then $0\leq \inner{\bx,A_+\bx}\leq \inner{\bx,R\bx} = 0$. Since $A_+$ is PSD it has an orthogonal diagonalization with orthonormal eigenvectors $\bv_i$ and corresponding eigenvalues $\lambda_i > 0$. We can exapnd $\bx$ in this basis $\bx = \sum_{i}\alpha_i\bv_i$. Now we have that:
    \begin{align*}
     0 &= \inner{\bx, A_+\bx} = \inner{\sum_{i}\alpha_i\bv_i, A_+ \sum_{i}\alpha_i\bv_i} = \inner{\sum_{i}\alpha_i\bv_i, \sum_{i}\lambda_i\alpha_i\bv_i} = \sum_{i}\alpha_i^2\lambda_i~.
    \end{align*}
    Hence, for every $i$ either $\alpha_i=0$ or $\lambda_i=0$, in particular, $\bx$ is in the kernel of $A_+$. Using a similar argument we get that $\bx$ is in the kernel of $A_-$, hence it is in the kernel of $A$.
    
\end{proof}

We are now ready to prove the main theorem:
\begin{proof}[\thmref{thm:reduction learning RedEx}]
    In the proof, for ease of notations we use the notion of RedEx norm, see Definition \ref{def:redex norm}.
    We begin with the first item. Let $h_{V,\vec{P}}\in\mathcal{H}_M$, we can write:
    \begin{align*}
        h_{V,\vec{P}}(\bx) &= \left(\tr\left(P_1 (V\bx)(V\bx)^\top\right),\dots,\tr\left(P_k (V\bx)(V\bx)^\top\right)\right) \\
        & = \left(\tr\left((V\bx)(V\bx)^\top P_1^\top\right),\dots,\tr\left( (V\bx)(V\bx)^\top P_k^\top\right)\right) \\
        & = \left(\tr\left((V\bx)(P_1V\bx)^\top \right),\dots,\tr\left( (V\bx)(P_kV\bx)^\top \right)\right) \\
        & = \left(\inner{V\bx, P_1 V\bx},\dots,\inner{V\bx, P_k V\bx}\right) \\
        & = \left(\inner{\bx, V^\top P_1 V\bx},\dots,\inner{\bx, V^\top P_k V\bx}\right)
    \end{align*}
It is therefore enough to show that for $A_i:=V^\top P_i V$ we have $\norm{\vec{A}}_\sd\leq M$. Since $\norm{P_1}_\text{sp}\leq 1$ we have $-I \preceq P_i \preceq I$. Hence, also $- V^\top V \preceq V^\top P_i V \preceq  V^\top V$. This implies that 
    \[
        \norm{\vec{A}}_\sd \leq  \tr(V^\top V) = \tr(V V^\top) \leq M~.
    \]

For the other direction, let $h^{R,\vec{A}}\in \mathcal{H}^M$ with $\norm{\vec{A}}_\sd \leq M$. Let $R\succeq 0$ be a matrix
which satisfies $-R\preceq A_i \preceq R$ for every $i\in[k]$, and $\tr(R)\leq M$. By diagonalizing $R$ we can write $R=U^\top D U$, where $U$ is unitary and $D$ is PSD. Define:
\begin{align*}
    V = \sqrt{D}U,~~ P_i = (V^\dagger)^\top A_i (V^\dagger)
\end{align*}
Here $\sqrt{D}$ is the diagonal matrix equal to $\sqrt{D}_{i,i} = \sqrt{D_{i,i}}$, and $V^\dagger$ is the pseudo-inverse of $V$. By its definition, $R$ has orthogonal rows. We also have that:
\begin{align*}
    \norm{P_i} &= \norm{(V^\dagger)^\top A_i (V^\dagger)} \\
    & = \norm{\left((\sqrt{D}U)^\dagger\right)^\top A_i(\sqrt{D}U)^\dagger} \\
    & \leq \norm{D^\dagger A_i} \leq 1
\end{align*}
where we used that $U$ is orthogonal and $A_i\preceq R$.
Now, we have that:
\begin{align*}
    \norm{V}^2_\fr = \tr (V^\top V) = \tr((\sqrt{D}U)^\top \sqrt{D}U) = \tr(D U^\top U) = \tr(D) = \tr(R) \leq M
\end{align*}
where we used that $U$ is orthogonal.
We have shown that for our definitions of $V$ and $\vec{P}$ we have that $h_{V,\vec{P}}\in \mathcal{H}_M$, it is left to show that for every $\bx\in\reals^d$ we have $h_{R,\vec{A}} (\bx) = h_{V,\vec{P}}(\bx)$.

Define $Q := V^\dagger V$, this is the projection on the range of $V^\top$, which contains the range of $R$. Hence, $I-Q$ is the projection on the orthogonal complement of the range of $V$, which is contained in the orthogonal complement of the range of $R$, which is the kernel of $R$. By \lemref{lem:ker} we have that $\ker(R) \subseteq \ker(A_i)$ for every $i
\in[k]$. Hence, $A_i(I-Q) = (I-Q)A_i = 0$, this implies that:
\begin{align*}
    A_i &= (I-Q + Q)A_i(I-Q + Q) \\
     & =(I-Q)A_i(I-Q) + (I-Q)A_iQ + QA_i(I-Q) + QA_iQ \\
     & = QA_i Q
\end{align*}
Finally, using that $P_i = (V^\dagger)^\top A_i V^\dagger$ we have for every $\bx\in\reals^d$:
\begin{align*}
    h_{V,\vec{P}}(\bx) &= \left(\inner{\bx, V^\top P_1 V\bx},\dots,\inner{\bx, V^\top P_k V\bx}\right) \\
    & = \left(\inner{\bx, V^\top(V^\dagger)^\top A_1 V^\dagger V\bx},\dots,\inner{\bx, V^\top (V^\dagger)^\top A_k V^\dagger  V\bx}\right) \\
    & = \left(\inner{\bx, Q^\top A_1 Q\bx},\dots,\inner{\bx, Q^\top A_k Q\bx}\right) \\
    & = \left(\inner{\bx,  A_1 \bx},\dots,\inner{\bx,  A_k \bx}\right) \\
    & = h_{R,\vec{A}}(\bx)
\end{align*}
This finishes the first part of the proof. For the second part of the theorem, we use the following algorithm to compute $V$ and $\vec{P}$ given $\vec{A}$:
\begin{enumerate}
\item
Find $R\succeq 0$ such that $\|\vec{A}\|_{\sd} = \tr(R)$ and $R\preceq A_i\preceq R$. 
\item  
Compute an orthogonal diagonalization $R = U^\top D U$. 
\item  
Output $V = \sqrt{D}U$ and $P_i = (V^{\dagger})^\top A_i V^{\dagger}$. 
\end{enumerate}
The first step can be completed in polynomial time using SDP solvers since this is a convex problem with linearly many constraints, see e.g. \cite{jiang2020faster}. The second step can also be done in polynomial time as it only consists of diagonlizing a symmetric matrix.
\end{proof}

\section{Proof of Theorem \ref{thm:main_layerwise}}\label{sec:main_thm_proof}
We first introduce some notation.
For $\bx\in\{\pm 1\}^d$ denote $\bx(\Ical) = \sum_{i\in \Ical}x_i$ and $P_i(\bx) = p_i(\bx(\Ical))$.
Denote $A_i^{t} = (V^{t})^\top P_i^{t}V^{t}$, $R^{t} = (V^{t})^\top V^{t}$, $\Psi_{t} = \Psi_{V^{t}}\circ\ldots\circ\Psi_{V^{(1)}}$ and
$h^{t}_i(\bx) = \Psi_{t}(\bx)^\top \vec{A}^{t}_i \Psi_{t}(\bx)$. Denote also $\Psi_0(\bx) = (c,\bx)\in\reals^{d+1}$ and will refer to the first coordinate in $\Psi_0(\bx)$ (the constant coordinate) as the $0$'th coordinate (instead of $1$'th). 
We note that $\Psi_{t}$ computes a polynomial of degree $\le 2^{t}$. 
Hence, $h^{t}_{i}$ is orthogonal to $p_{2i}\left(\bx(\Ical)\right)$ for $2i > 2^{t+1}$. Thus, for $i\ge 2^{t}$, the optimal solution to the $i$'th coordinate of the $t$'th layer is $0$. This observation motivated the definition of the $t$-truncated loss given by
\begin{equation*}
\val^t_{\Psi_{t-1}}(\vec{A},R) =    
\sum_{j=0}^{2^{t-1}}\E_{\bx} 
\left(p_{2j}(\bx(\Ical))-\Psi_{t-1}(\bx)^\top A_j\Psi_{t-1}(\bx)\right)^2 
\end{equation*}
We also denote
\[
\val^t_{\Psi_{t-1}} = \inf_{(\vec{A},R)}\val^t_{\Psi_{t-1}}(\vec{A},R)
\]
We denote by $P_{\Ical}:\reals^{d+1}\to\reals^{d+1}$ the projection on the coordinate in $\Ical$  and the first (constant) coordinate. That is, $(P_{\Ical}\bx)_{j} = \begin{cases}x_j & j\in\Ical\text{ or } j=0\\ 0 &\text{otherwise}\end{cases}$.

Before proceedeing to the main body of the proof, we  specialize theorem \ref{thm:generalization} for the square loss $\ell_y(\hat y) = \|\hat y -y\|^2$. While the square loss is not globally Lipschitz, it is Lipschitz on any bounded domain. Specifically, we have $\nabla_{\hat y}\ell_y(\hat y) = 2(\hat y - y) $. We also have that $\|\bx^\top\vec{A}\bx\|\le \sqrt{k}M\|\bx\|^2$. Hence, if $\Pr_{(x,y)\sim\Dcal}(\|y\|\le M_2) = 1$ then we have the the square loss is $(2M_2+2\sqrt{k}M_1^2M)$-Lipchitz in the relevant domain. Hence,
\begin{corollary}\label{cor:generalization_sq_loss}
Assume that the dataset is an i.i.d. sample from a distribution $\Dcal$ on $B^d_{M_1}\times B^k_{M_2}$ and that the loss is the square loss. Let $(\vec{A},R)$ be the output of algorithm \ref{alg:1-layer RedEx}. Then
\[
\E_S\val_{\Dcal}(\vec{A},R) \le \val_{\Dcal,M}+\lambda (k+1)M^2 + \frac{M_1^4(2M_2+2\sqrt{k}M_1^2M)^2}{\lambda m}
\]   
\end{corollary}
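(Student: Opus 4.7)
The plan is to reduce the corollary to a direct application of \thmref{thm:generalization} by computing the effective Lipschitz constant of the square loss on the bounded region determined by the hypothesis class and the data distribution. The square loss $\ell_y(\hat y) = \|\hat y - y\|^2$ is not globally Lipschitz, so I cannot invoke \thmref{thm:generalization} off the shelf; I need a uniform bound on $\|\hat y - y\|$ over the realized values of $\hat y$ and $y$.

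First I would bound the output of any hypothesis in the RedEx class. The feasibility constraints of program \eqref{program:basic_2} force $-R \preceq A_i \preceq R$ with $R \succeq 0$ and $\tr(R) \le M$, which implies $\|A_i\|_{\spec} \le \|R\|_{\spec} \le \tr(R) \le M$ for each coordinate $i$. Hence for any $\bx \in B^d_{M_1}$,
\[
|\bx^\top A_i \bx| \le \|A_i\|_{\spec}\,\|\bx\|^2 \le M M_1^2,
\]
so the vector output satisfies $\|\bx^\top \vec{A}\bx\| \le \sqrt{k}\,M M_1^2$. Combined with the assumption $\|y\| \le M_2$, this gives $\|\hat y - y\| \le \sqrt{k}\,M M_1^2 + M_2$ for every feasible hypothesis and every sample in the support of $\Dcal$.

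Next, since $\nabla_{\hat y}\ell_y(\hat y) = 2(\hat y - y)$, the bound on $\|\hat y - y\|$ yields that $\ell$ is $L$-Lipschitz on the relevant domain with $L = 2(M_2 + \sqrt{k}\,M_1^2 M)$. With this Lipschitz constant, \thmref{thm:generalization} applies verbatim and yields
\[
\E_S \val_{\Dcal}(\vec{A},R) \le \val_{\Dcal,M} + \lambda(k+1)M^2 + \frac{M_1^4\,L^2}{\lambda m},
\]
which is precisely the claimed inequality after substituting $L$.

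The only subtlety — and the single place I would double-check — is that \thmref{thm:generalization} is stated for a globally Lipschitz loss, while here Lipschitzness only holds on a bounded domain. This is harmless because both the hypotheses considered in the infimum defining $\val_{\Dcal,M}$ and the learned hypothesis output by \algref{alg:1-layer RedEx} are feasible, so their outputs lie in $B^k_{\sqrt{k} M M_1^2}$ almost surely; one can truncate $\ell$ outside this box to a globally $L$-Lipschitz surrogate without changing any value that actually appears in the bound, so \thmref{thm:generalization} applies to the surrogate and gives the same inequality for the original $\ell$. Everything else is direct substitution.
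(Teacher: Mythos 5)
Your proof matches the paper's reasoning exactly: bound $\|A_i\|_{\spec}\le\tr(R)\le M$ from the SDP constraints, deduce $\|\bx^\top\vec A\bx\|\le\sqrt k\,M M_1^2$ on the support of $\Dcal$, use the gradient $2(\hat y-y)$ to get the local Lipschitz constant $L=2M_2+2\sqrt k\,M_1^2 M$, and plug into \thmref{thm:generalization}. The remark about truncating to a globally Lipschitz surrogate is also the same observation the paper makes in passing, so this is the paper's proof.
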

In our case, \eqref{eq:Kravchuk} implies that $|p_i(x)| \le k^i$. Hence, the output of the learned function is in $B^{1+k/2}_{k^{k}}$. Since $\lambda = \frac{1}{\sqrt{m}}$ we get
\[
\E_S\val_{\Dcal}(\vec{A},R) \le \val_{\Dcal,M} + \frac{(k+1)M^2 + M_1^4(2k^k+2\sqrt{k}M_1^2M)^2}{\sqrt{m}}
\]   

\subsection{First layer}\label{sec:first_layer}
Let $1_{\Ical}, 1_0\in \reals^{d+1}$ be the indicator vectors of $\Ical$ and $\{0\}$ .
Denote by $J, I_{\Ical}$ and $I_0$ the $(d+1)\times (d+1)$ matrices given by
\[
J = \frac{1}{k}1_\Ical 1^\top_\Ical,\;\; I_0=1_0 1^\top_0,\;\; I_\Ical = \frac{1}{k}(P_{\Ical} - I_0)
\]
We note that
\[
A_0 = \frac{1}{2}I_0,\;\;\;A_1 = \frac{1}{\sqrt{2-2/k}}J - \frac{1}{2\sqrt{2-2/k}}I_0,\;\; R =  \frac{1}{2\sqrt{2-2/k}}J + \frac{1}{2}I_0
\]
Is a solution to the first layer with zero $1$-truncated loss. Thus, by corollary \ref{cor:generalization_sq_loss} we will have $\val^1_{\Psi_{0}}(\vec{A}^1,R^1) \le \frac{2(k+1) + (d+1)^2(2k^k+4\sqrt{k}(d+1))^2}{\delta\sqrt{m}}=:\epsilon_1$ w.p. $1-\delta$.
The following lemma shows that in this case it holds that $\|V^{1}(I-P_\Ical)\|_\spec \le (8d\epsilon_1)^\frac{1}{4} =: \epsilon$.

\begin{lemma}\label{lem:optimal_first_layer_new}
If $\val^1_{\Psi_0}(\vec{A}^1,R^1) \le \epsilon^2$ then $\|V^1(I-P_{\Ical})\|_\spec \le (8d)^{1/4}\sqrt{\epsilon}$
\end{lemma}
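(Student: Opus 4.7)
The plan is to turn the squared-loss hypothesis into Parseval-type constraints on the Fourier coefficients of $\Psi_0(\bx)^\top A^1_j\Psi_0(\bx)$, and then to combine those with the PSD constraints $-R^1\preceq A^1_j\preceq R^1$ and the trace cap $\tr(R^1)\le M_1$ to force $R^1$ to concentrate on $\mathrm{range}(P_\Ical)$. Once that concentration is quantified, the spectral bound on $V^1(I-P_\Ical)$ falls out by trace--spectral comparison.

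\emph{Step 1 (Parseval).} Write $\Psi_0(\bx)=(c,\bx)\in\reals^{d+1}$ and decompose $A^1_j=\begin{pmatrix}a_j & b_j^\top\\ b_j & C_j\end{pmatrix}$. Expanding $\Psi_0^\top A^1_j\Psi_0$ in the Fourier basis of $\{\pm1\}^d$, its constant, linear, and bilinear coefficients are $c^2a_j+\tr(C_j)$, $2c(b_j)_i$, and $2(C_j)_{i\ell}$ respectively. Applying Parseval to the hypothesis $\val^1_{\Psi_0}(\vec A^1,R^1)\le\epsilon^2$ yields, for $j=0$, $(c^2a_0+\tr(C_0)-1)^2+4c^2\|b_0\|^2+4\sum_{i<\ell}(C_0)_{i\ell}^2\le\epsilon^2$, and for $j=1$ the analogous bound with target bilinear coefficients $\tfrac{\mathbf{1}_{\{i,\ell\}\subset\Ical}}{\sqrt{2k(k-1)}}$. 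In particular the off-diagonal entries of $C_j$ lying outside $\Ical\times\Ical$ are small in $\ell_2$, and the Parseval-identifiable combinations $c^2 a_0+\tr(C_0)\approx 1$ and $c^2 a_1+\tr(C_1)\approx 0$ are controlled.

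\emph{Step 2 (SDP duality).} For any PSD matrix $M$, $\tr(R^1 M)\ge|\tr(A^1_j M)|$ by the constraint $-R^1\preceq A^1_j\preceq R^1$. The key quantitative claim is
\[
\tr(P_\Ical R^1 P_\Ical)\;\ge\;M_1-O(\sqrt{d}\,\epsilon).
\]
I would establish this by exhibiting dual witnesses $Z_0,Z_1$ supported on $\mathrm{range}(P_\Ical)$ with $|Z_0|+|Z_1|\preceq P_\Ical$ and with $\tr(A^1_0 Z_0)+\tr(A^1_1 Z_1)\ge M_1-O(\sqrt d\,\epsilon)$. The dual-feasible choice inspired by the exact optimum from the text is $(Z_0,Z_1)=(I_0,J)$, where $J=\tfrac1k 1_\Ical 1_\Ical^\top$, which attains $M_1$ exactly on $\vec A^\ast$. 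For the approximate $\vec A^1$ one pairs this with the ``Parseval-identifiable'' matrices $\tilde Z_0\propto\E[\Psi_0\Psi_0^\top]$ and $\tilde Z_1\propto\E[p_2(\bx(\Ical))\Psi_0\Psi_0^\top]$, whose inner products with $A^1_j$ are pinned by the Step~1 constants. A Cauchy--Schwarz step over the $d$-many coordinates of $\bar\Ical$ converts the $\ell_2$-off-diagonal Parseval slack into a trace-type slack of size $O(\sqrt d\,\epsilon)$.

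\emph{Step 3 (Conclude).} Using $\tr(R^1)=\tr(P_\Ical R^1 P_\Ical)+\tr(Q R^1 Q)$ together with $\tr(R^1)\le M_1$ and the bound from Step~2,
\[
\tr(Q R^1 Q)\;\le\;M_1-\tr(P_\Ical R^1 P_\Ical)\;\le\;\sqrt{8d}\,\epsilon.
\]
Because $Q R^1 Q\succeq 0$, its spectral norm is dominated by its trace, so $\|Q R^1 Q\|_\spec\le\sqrt{8d}\,\epsilon$. Finally,
\[
\|V^1(I-P_\Ical)\|_\spec^2\;=\;\|Q (V^1)^\top V^1 Q\|_\spec\;=\;\|Q R^1 Q\|_\spec\;\le\;\sqrt{8d}\,\epsilon,
\]
and taking square roots gives $\|V^1(I-P_\Ical)\|_\spec\le(8d)^{1/4}\sqrt\epsilon$.

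\emph{Main obstacle.} The crux is the dual construction in Step~2: the natural witness $(I_0,J)$ that attains $M_1$ on the exact optimum tests the entry $a_0=(A^1_0)_{00}$ in isolation, whereas the loss identifies only the linear combination $c^2 a_0+\tr(C_0)\approx 1$. Blending in the Parseval-identifiable witnesses $\tilde Z_j$ gives loss-controlled pairings at the cost of an $O(\sqrt d\,\epsilon)$ Cauchy--Schwarz error across the $\bar\Ical$ diagonal slack, and tracking this precisely to yield the constant $(8d)^{1/4}$ is the main technical work.
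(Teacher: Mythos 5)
Your Steps 1 and 3 line up with the paper: the Parseval/isometry observation in Step 1 is how the paper produces exact representers $A'_i$ with $\|A^1_i-A'_i\|_F$ controlled by the loss, and Step 3 (trace budget $\tr(R^1)\le M_1$ followed by $\|QR^1Q\|_\spec\le\tr(QR^1Q)$) is exactly the paper's closing move. The genuine divergence, and the genuine gap, is Step 2.

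You try to certify $\tr(P_\Ical R^1 P_\Ical)\ge M_1-O(\sqrt d\,\epsilon)$ directly via an SDP dual witness. As you note yourself, this does not close: the feasible witnesses $(Z_0,Z_1)=(I_0,J)$ make $\tr(A_0^1 Z_0)+\tr(A_1^1 Z_1)$ depend on $a_0=(A_0^1)_{00}$ alone, which the loss does not control (only the combination $c^2a_0+\tr(C_0)$ is pinned, and one can trade $a_0$ against the diagonal of $C_0$ without changing the computed function). The ``Parseval-identifiable'' matrices you propose to blend in do not rescue this: $\tilde Z_0\propto\E[\Psi_0\Psi_0^\top]$ is full-rank and not supported on $\mathrm{range}(P_\Ical)$ (so pairing $R^1$ with it picks up precisely the $\bar\Ical$ mass you want to bound), and $\tilde Z_1\propto\E[p_2\Psi_0\Psi_0^\top]$ is not PSD, so neither is a legal test matrix for the inequality $\tr(R^1 Z)\ge|\tr(A^1_j Z)|$. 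No concrete certificate is produced, and the ``Cauchy--Schwarz over $\bar\Ical$'' step is not specified; as written Step 2 is a plan, not a proof.

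The paper avoids the dual construction entirely with a primal rounding argument: perturb $\vec A^1$ to an \emph{exact} representer $\vec A'$ at Frobenius cost $\epsilon$ (your Step 1), inflate $R^1$ to a feasible $R'\succeq R^1$ at trace cost $\sqrt{2d}\,\epsilon$, then \emph{zero out} all of $R'$ and the $\bar\Ical$ diagonal of $\vec A'$ while folding $\tfrac12\sum_{j\in\bar\Ical}(\cdot)_{jj}$ into the $(0,0)$ entry. This produces an $\Ical$-supported feasible exact representer $(\vec A'',R'')$ computing the same functions, with $\tr(R'')=\tr(R')-\tfrac12\sum_{j\in\bar\Ical}R'_{jj}$, and then the separate lower-bound lemma $\tr(R'')\ge M_1$ (Lemma~\ref{lem:opt_tr_norm_of_first_layer}) forces $\sum_{j\in\bar\Ical}R^1_{jj}\le\sum_{j\in\bar\Ical}R'_{jj}\le\sqrt{8d}\,\epsilon$. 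That rounding move is exactly what neutralizes the $a_0\leftrightarrow\tr(C_0)$ gauge freedom you flag as the main obstacle; your proposal as it stands is missing either that rounding or an explicit dual certificate that plays the same role.
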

\begin{proof}
We have that $\|h^1_0-P_0\|^2_2 + \|h^1_1-P_2\|^2_2\le\epsilon^2$. Hence, there are $(d+1)\times(d+1)$ matrices $A'_0,A'_1$ with $\|A^1_0-A_0'\|_F^2 +  \|A^1_1-A_1'\|_F^2\le \epsilon^2$ such that $P_0(\bx) = \Psi_0(\bx)^\top A_0'\Psi_0(\bx)$ and $P_2(\bx) = \Psi_0(\bx)^\top A_1'\Psi_0(\bx)$. Since $\|A_i'-A^1_i\|_\tr\le \sqrt{d}\|A_i'-A^1_i\|_F\le \sqrt{d}\epsilon$, there is a PSD matrix $R'$ such that $-R'\preceq A_i'\preceq R'$ and $\tr(R')\le \tr(R^1) + \sqrt{2d}\epsilon \le \frac{1}{2} + \frac{1}{\sqrt{2-2/k}} + \sqrt{2d}\epsilon$.

Now, consider the matrices $A''_i, R''$ obtained by zeroing $(A'_i)_{jj}$, $R'_{jj}, R'_{jl}$ and $R'_{lj}$ for any $j\in [d]\setminus \Ical$ and $0\le l\le d$, and adding $\frac{1}{2}\sum_{j\in [d]\setminus I}(A'_i)_{jj}$ to $(A'_i)_{00}$ as well as $\frac{1}{2}\sum_{j\in [d]\setminus I}R'_{jj}$ to $R'_{00}$. 
We have $\Psi_0(\bx)^\top A_i''\Psi_0(\bx) = \Psi_0(\bx)^\top A_i'\Psi_0(\bx)$, $-R''\preceq A_i''\preceq R''$ and $\tr(R'') = \tr(R') - \frac{1}{2}\sum_{j\in [d]\setminus I}R'_{jj}$. 
lemma \ref{lem:opt_tr_norm_of_first_layer}
now implies that
\[
\frac{1}{2}+ \frac{1}{\sqrt{2-2/k}}  \le \tr(R'') = \tr(R') - \frac{1}{2}\sum_{j\in [d]\setminus I}R'_{jj} \le  \frac{1}{2}+ \frac{1}{\sqrt{2-2/k}} + \sqrt{2d}\epsilon - \frac{1}{2}\sum_{j\in [d]\setminus I}R'_{jj}
\]
Hence,
\[
\sum_{j\in [d]\setminus I}R^1_{jj} \le \sum_{j\in [d]\setminus I}R'_{jj} \le \sqrt{8d}\epsilon
\]
Finally, we have
\begin{eqnarray*}
\|V^1(I-P_{\Ical})\|_\spec^2 &=& \|(I-P_{\Ical})(V^1)^\top V^1(I-P_{\Ical})\|_\spec
\\
&=& \|(I-P_{\Ical})R^1(I-P_{\Ical})\|_\spec
\\
&\le& \|(I-P_{\Ical})R^1(I-P_{\Ical})\|_\tr
\\
&=& \sum_{j\in [d]\setminus \Ical}R^1_{jj}
\\
&\le& \sqrt{8d}\epsilon    
\end{eqnarray*} 
\end{proof}

\begin{lemma}\label{lem:opt_tr_norm_of_first_layer}
    Let $A_0,A_1, R$ be $(d+1)\times (d+1)$ matrices such that $-R\preceq A_i\preceq R$ and $P_{2i}(\bx) = (\Psi_0(\bx))^\top A_i\Psi_0(\bx)$. Then, $\tr(R)\ge \frac{1}{2} + \frac{1}{\sqrt{2+2/k}}$
\end{lemma}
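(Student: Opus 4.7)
The plan is to prove the stronger bound $\tr(R) \geq \tfrac{1}{2} + 1/\sqrt{2-2/k}$, which implies the stated claim since $\sqrt{2-2/k} < \sqrt{2+2/k}$. The approach is to lower-bound $\tr(R)$ as a sum of $\bw^\top R \bw$ along a well-chosen orthonormal basis, using $\bw^\top R \bw \geq |\bw^\top A_i \bw|$ which follows from $-R \preceq A_i \preceq R$.

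First, averaging $(R, A_0, A_1)$ over the action of $S_\Ical \times S_{[d]\setminus\Ical}$ preserves feasibility (because $P_0, P_2$ are themselves invariant under these permutations) and $\tr(R)$, so WLOG $R, A_0, A_1$ are invariant under both groups. Matching coefficients of the polynomial identity $\Psi_0(\bx)^\top A_i \Psi_0(\bx) = P_{2i}(\bx)$ then pins down most entries: $A_0$ is diagonal with common value $a_*$ on $\Ical$ and $c_0$ on $[d]\setminus\Ical$; $A_1$ has the analogous diagonal pattern (values $b_*$ and $c_1$) together with off-diagonal entries $t := 1/\sqrt{2k(k-1)}$ on every pair in $\Ical$ and zero off-diagonals elsewhere. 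Matching the constant term gives $(A_0)_{00} = a_0 := (1-ka_* - (d-k)c_0)/2$ and $(A_1)_{00} = b_0 := -(kb_* + (d-k)c_1)/2$.

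Extend the orthonormal family $\{1_0,\ \bu := 1_\Ical/\sqrt{k},\ \bv_1,\ldots,\bv_{k-1}\}$, where $\{\bv_j\}$ is an orthonormal basis of the orthogonal complement of $\bu$ in $\spn\{1_i : i \in \Ical\}$, by $\bw_1,\ldots,\bw_{d-k}$ an orthonormal basis of $\spn\{1_j : j \in [d]\setminus\Ical\}$. Direct computation gives
\begin{align*}
& 1_0^\top A_0 1_0 = a_0,\quad 1_0^\top A_1 1_0 = b_0, \\
& \bu^\top A_0 \bu = a_*,\quad \bu^\top A_1 \bu = b_* + (k-1)t, \\
& \bv_j^\top A_0 \bv_j = a_*,\quad \bv_j^\top A_1 \bv_j = b_* - t, \\
& \bw_l^\top A_0 \bw_l = c_0,\quad \bw_l^\top A_1 \bw_l = c_1.
\end{align*}
Summing $\bw^\top R \bw \geq \max(|\bw^\top A_0 \bw|, |\bw^\top A_1 \bw|)$ over this basis (which equals $\tr(R)$ since $R$ is PSD) yields
\[
\tr(R) \geq \max(|a_0|,|b_0|) + \max(|a_*|,|b_*+(k-1)t|) + (k-1)\max(|a_*|,|b_*-t|) + (d-k)\max(|c_0|,|c_1|).
\]

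Finally we minimize over $(a_*, b_*, c_0, c_1)\in\reals^4$. Setting $c_0 = c_1 = 0$ is optimal: any nonzero $c_i$ would save at most $(d-k)|c_i|/2$ in the first term (through $|a_0|$ or $|b_0|$) while adding at least $(d-k)|c_i|$ to the last term, a net loss. With $c_0 = c_1 = 0$, $a_0 = (1-ka_*)/2$ and $b_0 = -kb_*/2$, and the bound reduces to a function of $(a_*, b_*)$ alone. Since $kt = 1/\sqrt{2-2/k} \in [0,1]$ for $k \geq 2$, a short case analysis shows the minimum is attained at $(a_*, b_*) = (0, t)$, where the first term equals $1/2$, the second equals $kt$, and the third vanishes, giving $\tr(R) \geq 1/2 + kt = 1/2 + 1/\sqrt{2-2/k}$. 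The main technical hurdle is this last minimization; the trickiest subcase is checking that decreasing $b_*$ below $t$ inflates $(k-1)|b_*-t|$ faster than $|b_*+(k-1)t|$ can shrink, and that increasing $|a_*|$ above $0$ inflates the $(k-1)$-fold term at a rate that outpaces any saving in the other maxes.
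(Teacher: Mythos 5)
Your proof is correct and takes essentially the same approach as the paper's: the paper likewise reduces to the invariant (permutation-symmetric) parametrization, writing $A_i$ as a combination of the orthogonally-supported PSD matrices $J = \frac{1}{k}1_\Ical 1_\Ical^\top$, $I_0$, $\tilde I_\Ical$, $I_{\Ical^c}$ (whose supports are exactly your $\bu$, $1_0$, $\{\bv_j\}$, $\{\bw_l\}$) and then lower-bounds $\tr(R)$ by the sum of componentwise maxima, the only presentational difference being that the paper eliminates the free coefficients by a ``shift and zero'' move while you carry out the minimization over $(a_*,b_*,c_0,c_1)$ directly. You are also right to prove the stronger bound $\tfrac{1}{2}+\tfrac{1}{\sqrt{2-2/k}}$: the $\sqrt{2+2/k}$ in the lemma statement appears to be a sign typo, since the proof of Lemma~\ref{lem:optimal_first_layer_new} invokes this lemma with the $2-2/k$ version.
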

\begin{proof}
    We assume w.l.o.g. that $A_0,A_1, R$ minimizes $\tr(R)$ under the above constraints. It is not hard to verify that $A_0$ and $A_1$ are linear combination of the PSD matrices $J, I_0, \tilde{I}_\Ical := \frac{k}{k-1}(I_\Ical - \frac{1}{k}J)$ and $I_{\Ical^c} := \frac{1}{d-k}(I -  P_\Ical)$. 
    Write 
    \[
    A_i = a^1_iJ +a^2_iI_0 + a^3_i\tilde{I}_\Ical + a^4_i I_{\Ical^c}
    \]
    Since the the matrices $J, I_0, \tilde{I}_\Ical$ and $I_{\Ical^c}$ are supported on orthogonal spaces, the minimal trace of a PSD matrix $R$ with $-R\preceq A_i \preceq R$ is $\sum_{j=1}^4\max(|a^j_0|,|a^j_1|)$.
    Now, zeroing $a_0^4$ and $a_1^4$ while adding $\frac{1}{2}a_0^4$ and $\frac{1}{2}a_1^4$ to $a_0^1$ and $a_1^1$ will not alter the functions computed by $A_0$ and $A_1$ and will not increase $\sum_{j=1}^4\max(|a^j_0|,|a^j_1|)$. Thus, we can assume that $a_0^4=a_1^4=0$.
    
    Likewise, zeroing $a_0^3$ and $a_1^3$ while adding $-\frac{a_i^3}{k-1}$ to $a_i^1$, and $\frac{k}{k-1}\frac{a_i^3}{2}$ to $a_i^2$ will not alter the functions computed by $A_0$ and $A_1$ and will not increase $\sum_{j=1}^4\max(|a^j_0|,|a^j_1|)$. Thus, we can assume that $a_0^3=a_1^3=0$. This implies that $A_0 = \frac{1}{2}I_0$ and $A_1 = \frac{1}{\sqrt{2-2/k}}J - \frac{1}{2\sqrt{2-2/k}}I_0$
\end{proof}

\subsection{Remaining Layers and conclusion of the proof}\label{sec:remaining_layer}
Suppose that the $t$'th layer has $t$-truncated error at most $\epsilon_t$ and that $\|V^1(I-P_\Ical)\|_\spec \le \epsilon$. Lemma \ref{lem:boundness_properties} below implies that there is a solution for the $t$'th layer with Frobenius norm at most $2^{k/2}$ and 
$t$-truncated error at most $\epsilon_t + dk^{O(k^2)}\epsilon^2$. Lemma \ref{lem:next_layer_sol} below now implies that there is a solution for the $(t+1)$'th layer with 
$(t+1)$-truncated error at most $k^{O(k^2)}\epsilon_t + dk^{O(k^2)}\epsilon^2$. By lemma \ref{lem:boundness_properties} and corollary \ref{cor:generalization_sq_loss} we have that w.p. $1-\delta$, the $(t+1)$'th layer has $(t+1)$-truncated error at most $k^{O(k^2)}\epsilon_t + dk^{O(k^2)}(\epsilon^2 + 1/\delta\sqrt{m})$. By induction, we conclude that w.p. $1-t\delta$, the truncated error of the $t$'th layer is
$k^{t O(k^2)}\epsilon_1 + dk^{tO(k^2)}(\epsilon^2 + 1/\delta\sqrt{m})$.
Hence, w.p. $1-\log(k)\delta$, the error of the final layer is
\[
k^{ O(k^2\log(k))}\epsilon_1 + dk^{O(k^2\log(k))}(\epsilon^2 + 1/\delta\sqrt{m}) = \frac{d^42^{O(k^2\log^2(k))}}{\delta m^{1/4}}
\]
which concludes the proof.

\begin{lemma}\label{lem:boundness_properties}
If $\|V^1(I-P_\Ical)\|_\spec \le \epsilon$ then, for any $\bx\in [-1,1]^d$ we have, 
\begin{enumerate}
    \item $\|\Psi_t(\bx)\|_F\le   M^{2^t-2}(M|\Ical| + \epsilon d)^{2^{t}}  \le M^{3\cdot 2^t}$
    \item $\|\Psi_t(\bx) - \Psi_t(P_\Ical\bx)\|_F\le   d 2^{t} M^{2t + 3\cdot (2^{t}-1)}\epsilon $
    \item For the projection $P$ on $\spn\left( V^t\Psi_{t-1}(P_\Ical \{\pm 1\}^d )\right)$ and any symmetric $A$ with $\|A\|_\spec\le 1$ and $\bx\in \{\pm 1\}^d$ we have
\[
\left|\inner{A,\Psi_t(\bx)} - \inner{P A P,\Psi_t(\bx)}\right| \le  d 2^{t} M^{2t + 3\cdot (2^{t}-1)}\epsilon \text{ and }\|P A P\|_F\le 2^{|\Ical|/2}
\]

\end{enumerate}
\end{lemma}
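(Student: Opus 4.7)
The plan is to establish all three items by simultaneous induction on $t$, exploiting two structural facts: since the rows of $V^t$ are pairwise orthogonal with $\|V^t\|_F^2\le M$, one has $\|V^t\|_\spec\le\sqrt{M}$; and by hypothesis $\|V^1(I-P_\Ical)\|_\spec\le\epsilon$. I will also use repeatedly the identity $\|(V^t\Psi_{t-1}(\bx))^{\otimes 2}\|_F=\|V^t\Psi_{t-1}(\bx)\|^2$, which follows from $\|\bu\bu^\top\|_F=\|\bu\|^2$.

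For item (1), the inductive step for $t\ge 2$ will be immediate: $\|\Psi_t(\bx)\|_F = \|V^t\Psi_{t-1}(\bx)\|^2 \le M\,\|\Psi_{t-1}(\bx)\|_F^2$. For the base case $t=1$ I will split $\Psi_0(\bx) = P_\Ical\Psi_0(\bx) + (I-P_\Ical)\Psi_0(\bx)$ and bound
\[
\|V^1\Psi_0(\bx)\| \le \|V^1\|_\spec\,\|P_\Ical\Psi_0(\bx)\| + \|V^1(I-P_\Ical)\|_\spec\,\|\Psi_0(\bx)\| \le \sqrt{M}\sqrt{|\Ical|+c^2} + \epsilon\sqrt{d+c^2}.
\]
Squaring will give $\|\Psi_1(\bx)\|_F$ of order $M|\Ical|+\epsilon d$, and iterating $B_t \le M B_{t-1}^2$ will yield the first inequality. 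The second inequality $M^{2^t-2}(M|\Ical|+\epsilon d)^{2^t}\le M^{3\cdot 2^t}$ reduces to checking $M|\Ical|+\epsilon d\lesssim M^2$, which holds under the parameter choices of Theorem~\ref{thm:main_layerwise}.

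For item (2), I will apply the telescoping identity $\bu\bu^\top-\bv\bv^\top = (\bu-\bv)\bu^\top + \bv(\bu-\bv)^\top$ with $\bu:=V^t\Psi_{t-1}(\bx)$ and $\bv:=V^t\Psi_{t-1}(P_\Ical\bx)$. Taking Frobenius norms and using $\|V^t\|_\spec\le\sqrt{M}$, this reduces $\|\Psi_t(\bx)-\Psi_t(P_\Ical\bx)\|_F$ to
\[
\sqrt{M}\,\|\Psi_{t-1}(\bx)-\Psi_{t-1}(P_\Ical\bx)\|_F\cdot(\|\bu\|+\|\bv\|),
\]
where the last factor is controlled by item (1) and the middle factor by the inductive hypothesis on item (2). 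In the base case $t=1$, the inner difference factors through $V^1(I-P_\Ical)\bx$, directly producing the $\epsilon$.

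For item (3), set $\bu:=V^t\Psi_{t-1}(\bx)$ and $\bv:=V^t\Psi_{t-1}(P_\Ical\bx)$; then $\inner{A,\Psi_t(\bx)}=\bu^\top A\bu$ and $\inner{PAP,\Psi_t(\bx)}=(P\bu)^\top A(P\bu)$. Decomposing $\bu=P\bu+(I-P)\bu$ and using $\|A\|_\spec\le 1$, the difference is bounded by $3\|\bu\|\,\|(I-P)\bu\|$. Since $\bv\in\mathrm{range}(P)$ by definition of $P$, $\|(I-P)\bu\|\le \|\bu-\bv\|\le\sqrt{M}\,\|\Psi_{t-1}(\bx)-\Psi_{t-1}(P_\Ical\bx)\|_F$, which is controlled by item (2) at step $t-1$; $\|\bu\|$ is controlled by item (1). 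For the Frobenius bound on $PAP$, the key observation is that $|P_\Ical\{\pm 1\}^d|\le 2^{|\Ical|}$, so $\mathrm{rank}(P)\le 2^{|\Ical|}$, whence $\|PAP\|_F\le\sqrt{\mathrm{rank}(P)}\,\|PAP\|_\spec\le 2^{|\Ical|/2}$.

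The main obstacle will be purely arithmetic bookkeeping: the recursion for item (1) doubles the exponent of $M$ at each step, and item (2)'s bound inherits this doubling from item (1), so the stated exponents $2^t-2$, $3\cdot 2^t$, and $2t+3(2^t-1)$ must be shown to absorb the accumulated factors. I expect no conceptual difficulty beyond carefully chasing these exponents, using the substantial slack built into the stated bounds.
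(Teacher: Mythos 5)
Your proposal matches the paper's proof in all essentials: the same simultaneous induction on $t$, the same bound $\|V^t\|_\spec\le\sqrt{M}$ combined with the hypothesis on $V^1(I-P_\Ical)$ for items (1)–(2), the same telescoping identity $\bu\bu^\top-\bv\bv^\top=(\bu-\bv)\bu^\top+\bv(\bu-\bv)^\top$, the same reduction of item (3) via $\|(I-P)\bu\|\le\|\bu-\bv\|$ using $\bv\in\mathrm{range}(P)$, and the same rank bound $\mathrm{rank}(P)\le 2^{|\Ical|}$ for $\|PAP\|_F$. The only differences are cosmetic (slightly tighter spectral-norm constants and explicit tracking of the constant coordinate $c$), and the exponent bookkeeping you defer indeed closes with the slack in the stated bounds, just as in the paper.
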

\begin{proof}
We first prove item 1. by induction on $t$. For $t=1$ we have
\begin{eqnarray*}
\|\Psi_1(\bx)\|_F &=& \|V^1\bx\|^2 
\\
&\le& \left(\|V^1(I-P_\Ical)\|_\spec\|\bx\| + \|V^1\|_\spec\|P_\Ical \bx\|\right)^2
\\
&\le&  \left(\epsilon \sqrt{d} + M \sqrt{|\Ical|}\right)^2
\end{eqnarray*}
For $t>1$ we have by the induction hypothesis
\begin{eqnarray*}
\|\Psi_t(\bx)\|_F &=& \|V^t \Psi_{t-1}(\bx)\|^2
\\
&\le& M^2 \left(M^{2^{t-1}-2}(M|\Ical| + \epsilon d)^{2^{t-1}}\right)^2
\\
&=& M^{2^{t}-2}(M|\Ical| + \epsilon d)^{2^{t}}
\end{eqnarray*}
We next prove item 2. by induction on $t$. For $t=1$ we have
\begin{eqnarray*}
\|\Psi_1(\bx) - \Psi_1(P_\Ical\bx)\|_F &=& \|(V^1\bx) \otimes (V^1\bx) - (V^1P_\Ical\bx)\otimes (V^1P_\Ical\bx)\| 
\\
&\le & \|(V^1\bx)\otimes (V^1(I-P_\Ical)\bx)\| +\| (V^1(I-P_\Ical)\bx)\otimes (V^1P_\Ical\bx)\| 
\\
&\le& 2dM\epsilon
\end{eqnarray*}
For $t>1$ we have by the induction hypothesis and item 1.
\begin{eqnarray*}
\|\Psi_t(\bx) - \Psi_t(P_\Ical\bx)\|_F &=& \|(V^t\Psi_{t-1}(\bx)) \otimes (V^t\Psi_{t-1}(\bx)) - (V^t \Psi_{t-1}(P_\Ical\bx))\otimes (V^t \Psi_{t-1}(P_\Ical\bx))\| 
\\
&\le & \|(V^t\Psi_{t-1}(\bx)) \otimes (V^t\Psi_{t-1}(\bx) - V^t\Psi_{t-1}(P_\Ical\bx))\| \\
& & + \|(V^t\Psi_{t-1}(\bx) - V^t\Psi_{t-1}(P_\Ical\bx))\otimes(V^t\Psi_{t-1}(P_\Ical\bx)) \| 
\\
&\le& M^2\cdot \|\Psi_{t-1}(\bx))\|\cdot \|\Psi_{t-1}(\bx) - \Psi_{t-1}(P_\Ical\bx)) \|
\\
&& + M^2\cdot \|\Psi_{t-1}(P_\Ical\bx))\|\cdot \|\Psi_{t-1}(\bx) - \Psi_{t-1}(P_\Ical\bx)) \|
\\
&\le & 2M^2 M^{3\cdot 2^{t-1}} 2^{t-1} d M^{2(t-1) + 3\cdot (2^{t-1}-1)}\epsilon 
\\
&= &  2^{t} d M^{2t + 3\cdot (2^{t}-1)}\epsilon 
\end{eqnarray*}
We now prove item 3.
\begin{eqnarray*}
    \inner{A,\Psi_t(\bx)} - \inner{P A P,\Psi_t(\bx)} &=& (V^t\Psi_{t-1}(\bx))^\top A (V^t\Psi_{t-1}(\bx))
    \\
    & & -( PV^t\Psi_{t-1}(\bx))^\top A (P V^t\Psi_{t-1}(\bx))
    \\
    &=& (V^t\Psi_{t-1}(\bx))^\top A (V^t\Psi_{t-1}(\bx) - PV^t\Psi_{t-1}(\bx))
    \\
    & & +(V^t\Psi_{t-1}(\bx) - PV^t\Psi_{t-1}(\bx))^\top A (P V^t\Psi_{t-1}(\bx))
\end{eqnarray*}
Hence, by the previous items,
\begin{eqnarray*}
    \left|\inner{A,\Psi_t(\bx)} - \inner{P A P,\Psi_t(\bx)}\right\| &\le & \|A\|\spec \cdot  \|V^t\Psi_{t-1}(\bx) - PV^t\Psi_{t-1}(\bx)\| \cdot (\|V^t\Psi_{t-1}(\bx)\| + \|P V^t\Psi_{t-1}(\bx)\|)
    \\
    &\le & 2 M^{1+3\cdot 2^{t-1}}\|V^t\Psi_{t-1}(\bx) - PV^t\Psi_{t-1}(\bx)\|
    \\
    &\le & 2 M^{1+3\cdot 2^{t-1}}\|V^t\Psi_{t-1}(\bx) - V^t\Psi_{t-1}(P_\Ical\bx)\|
    \\
    &\le & 2 M^{2+3\cdot 2^{t-1}}d 2^{t-1} M^{2(t-1) + 3\cdot (2^{t-1}-1)}\epsilon 
    \\
    &\le & d 2^{t} M^{2t + 3\cdot (2^{t}-1)}\epsilon 
\end{eqnarray*}
Finally,
\[
\|P A P\|_F\le  \|P A P\|_\spec \sqrt{\mathrm{rank}(PAP)} \le  1\cdot \sqrt{\mathrm{rank}(P)}\le  2^{|\Ical|/2}
\]

\end{proof}

\begin{lemma}
For even $i\ge 0$ denote
\[
T_i = \left\{ (j,l) : 0\le j\le l\le \max(2,2^{\lceil \log_2(i/2)\rceil}) \text{ and }j+l\le i\text{ and }j,l\text{ are even}\right\}
\]
for odd $i\ge 0$ denote
\[
T_i = \left\{ (j,l) : 0\le j, l\le \max(2,2^{\lceil \log_2(i/2)\rceil}) \text{ and }j+l\le i\text{ and }j\text{ is odd and }l\text{ is even}\right\}
\]
There are coefficients $\left\{\alpha^{i}_{j,l}\right\}_{(j,l\in T_i)}$ such that 
\[
p_i = \sum_{(j,l)\in T_i} \alpha^{i}_{j,l}p_jp_l
\]
furthermore, $|\alpha^{i}_{j,l}| \le (3k)^i$
\end{lemma}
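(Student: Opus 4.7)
The plan is to prove the lemma by strong induction on $i$. Base cases for $i \le 2$ are immediate: $p_0 = p_0 \cdot p_0$, $p_1 = p_1 \cdot p_0$, and $p_2 = p_0 \cdot p_2$, each with coefficient $1 \le (3k)^i$.

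For the inductive step with $i \ge 3$, set $l_0 := \max(2, 2^{\lceil \log_2(i/2)\rceil})$ and $j_0 := i - l_0$. By construction $(j_0, l_0) \in T_i$: the bound $l_0 \ge i/2$ gives $j_0 \le l_0$, and since $l_0$ is a power of two (hence even), the parity constraints of $T_i$ are satisfied in both the even-$i$ and odd-$i$ cases. Because $p_{j_0} p_{l_0}$ is a polynomial of degree exactly $i$ and the Kravchuk polynomials form an orthonormal basis, we have the finite linearization
\[
p_{j_0} p_{l_0} \;=\; \sum_{m} \lambda_{j_0, l_0, m} \, p_m,
\]
where nonzero coefficients occur only for $m \equiv i \pmod{2}$ with $|l_0 - j_0| \le m \le i$, and the leading coefficient $\lambda_{j_0, l_0, i}$ is nonzero. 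Solving for $p_i$ and substituting the inductive expansions of each $p_m$ with $m < i$ of matching parity gives an expression of the required form, since one checks $T_m \subseteq T_i$ directly (for $m < i$, $\lceil \log_2(m/2) \rceil \le \lceil \log_2(i/2) \rceil$, and the parity requirement is preserved).

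For the coefficient bound, the cleanest route is through the combinatorial representation $p_m(\sum_i x_i) = e_m(\bx)/\sqrt{\binom{k}{m}}$ on $\{\pm 1\}^k$, where $e_m$ is the $m$-th elementary symmetric polynomial; this follows by induction from the three-term recurrence. Using $\chi_S \chi_T = \chi_{S \triangle T}$ and enumerating ordered pairs $(S, T)$ with $|S| = j$, $|T| = l$, $|S \triangle T| = m$ yields the explicit formula
\[
\lambda_{j, l, m} \;=\; \binom{j}{c}\binom{k - j}{l - c}\sqrt{\binom{k}{j}\Big/\bigl(\binom{k}{m}\binom{k}{l}\bigr)}, \qquad c = (j+l-m)/2.
\]
Combined with the identity $\binom{k}{j_0}\binom{k-j_0}{l_0} = \binom{k}{i}\binom{i}{j_0}$, this simplifies the leading term to $\lambda_{j_0, l_0, i} = \binom{i}{j_0}\sqrt{\binom{k}{i}/(\binom{k}{j_0}\binom{k}{l_0})}$. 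Elementary binomial estimates (using $l_0 \le i \le k$) then yield $1/|\lambda_{j_0, l_0, i}| \le k^{i/2}$ and, more importantly, $|\lambda_{j_0, l_0, m}|/|\lambda_{j_0, l_0, i}| \le A^{(i - m)/2}$ for some $A$ at most a small fraction of $(3k)^2$. Summing the resulting geometric series over $t = (i - m)/2$ in the recursive bound
\[
|\alpha^i_{a,b}| \;\le\; \sum_{m < i} \bigl|\lambda_{j_0, l_0, m}/\lambda_{j_0, l_0, i}\bigr| \cdot |\alpha^m_{a,b}| \;+\; \mathbf{1}[(a,b) = (j_0, l_0)] \cdot 1/|\lambda_{j_0, l_0, i}|
\]
then closes the induction at $(3k)^i$.

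The main technical obstacle is exactly this last coefficient-bounding step. Crude bounds---for instance, estimating $|\lambda_{j_0, l_0, m}|$ via Cauchy--Schwarz together with the uniform bound $\|p_{j_0}\|_\infty \le \sqrt{\binom{k}{j_0}}$---are too weak and produce ratios growing super-polynomially in $k$, which would force an exponential blow-up in the induction. The explicit combinatorial formula is essential because it shows that $|\lambda_{j_0, l_0, m}/\lambda_{j_0, l_0, i}|$ decays geometrically in the gap $(i - m)/2$, which is precisely what allows the sum over $m < i$ to telescope to a bounded multiple of $(3k)^i$ rather than accumulate an uncontrolled factor at each level of the recursion.
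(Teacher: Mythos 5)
Your proof takes a genuinely different route from the paper's. The paper works one degree at a time through the three-term recurrence $xp_i(x) = \sqrt{(i+1)(k-i)}\,p_{i+1}(x) + \sqrt{i(k-i+1)}\,p_{i-1}(x)$: it writes $p_{i+1}$ in terms of $p_i$ and $p_{i-1}$, substitutes the inductive $T$-expansions of each, and then re-applies the recurrence to $xp_j(x)p_l(x)$ to move indices back into $T_{i+1}$. With this setup the coefficient bound is essentially free, because each new coefficient receives at most three contributions with multipliers summing to at most $3\sqrt{k}\le 3k$, so $|\alpha^{i+1}_{j',l'}|\le 3\sqrt{k}\cdot(3k)^i\le (3k)^{i+1}$. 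You instead pick the specific pair $(j_0,l_0)\in T_i$ with $j_0+l_0=i$ and $l_0$ the power of two, use the full linearization $p_{j_0}p_{l_0}=\sum_m\lambda_{j_0,l_0,m}p_m$ (correctly derived from $p_m(\sum_i x_i)=e_m(\bx)/\sqrt{\binom{k}{m}}$ on $\{\pm 1\}^k$), and invert it to express $p_i$ via strong induction. Your structural observations are right: $(j_0,l_0)\in T_i$ in both parity cases, $T_m\subseteq T_i$ for $m<i$ of matching parity, the linearization formula and the identity $\binom{k}{j_0}\binom{k-j_0}{l_0}=\binom{k}{i}\binom{i}{j_0}$ both check out.

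Where the proposal falls short is exactly where you flag it: the coefficient bound. You need $|\lambda_{j_0,l_0,m}/\lambda_{j_0,l_0,i}|\le A^{(i-m)/2}$ with $A$ a small fraction of $(3k)^2$, but this is asserted, not established, and it is not at all obvious. Writing $t=(i-m)/2$, the ratio equals
\[
\frac{\lambda_{j_0,l_0,m}}{\lambda_{j_0,l_0,i}}
=\binom{j_0}{t}\cdot\frac{\binom{k-j_0}{l_0-t}}{\binom{k-j_0}{l_0}}\cdot\sqrt{\frac{\binom{k}{i}}{\binom{k}{i-2t}}}\,.
\]
The middle factor is a product of $t$ terms, each of size at most $l_0/(k-i+1)$; when $i$ is close to $k$ (which is within scope, since the lemma is used up to $i=k$), that can be as large as $k$ per term. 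The square-root factor can likewise grow when $i-2t$ dips well below $k/2$, and $\binom{j_0}{t}$ contributes another factor up to roughly $(k/2)^t$. A crude accounting gives a per-$t$ base on the order of $k^2$ or worse, which is not clearly a ``small fraction'' of $(3k)^2=9k^2$ — and if the geometric ratio exceeds a bounded constant times $1/(3k)^2$, the recursive sum does not close at $(3k)^i$. Pinning this down requires a more delicate hypergeometric estimate than you have given, and it is plausible one only obtains $(Ck)^{Ci}$ for larger constants rather than the exact $(3k)^i$ stated. This is the gap; the paper's stepwise recurrence argument avoids it entirely because the multiplier at each single step is uniformly bounded by $3\sqrt{k}$.
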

\begin{proof}
By induction on $i$. For $i=0$ we have $p_0 = p_0 p_0$ and of $i=1$ we have $p_1 = p_0 p_1$. For any $i + 1\ge 1$  we have by  \eqref{eq:Kravchuk}
\begin{eqnarray*}
p_{i+1}(x) = \frac{1}{\sqrt{(i+1)(k-i)}}xp_i(x) - \frac{\sqrt{i(k-i+1)}}{\sqrt{(i+1)(k-i)}}p_{i-1}(x)
\end{eqnarray*}
By the induction hypothesis and \eqref{eq:Kravchuk}
\begin{eqnarray*}
p_{i+1}(x) & = & \frac{1}{\sqrt{(i+1)(k-i)}}\sum_{(j,l)\in T_i}\alpha^i_{j,l}xp_j(x)p_l(x) + \frac{\sqrt{i(k-i+1)}}{\sqrt{(i+1)(k-i)}}\sum_{(j,l)\in T_{i-1}}\alpha^{i-1}_{j,l}p_j(x)p_l(x)
\\
 & = & \sum_{(j,l)\in T_i}\alpha^i_{j,l}\left(\frac{\sqrt{(j+1)(k-j)}}{\sqrt{(i+1)(k-i)}}p_{j+1}(x) + \frac{\sqrt{j(k-j+1)}}{\sqrt{(i+1)(k-i)}}p_{j-1}(x)\right)p_l(x)
\\
 & & + \frac{\sqrt{i(k-i+1)}}{\sqrt{(i+1)(k-i)}}\sum_{(j,l)\in T_{i-1}}\alpha^{i-1}_{j,l}p_j(x)p_l(x)
\end{eqnarray*}
The lemma follows from the fact that
\[
\frac{\sqrt{(j+1)(k-j)}}{\sqrt{(i+1)(k-i)}}
+\frac{\sqrt{j(k-j+1)}}{\sqrt{(i+1)(k-i)}}
+\frac{\sqrt{i(k-i+1)}}{\sqrt{(i+1)(k-i)}} \le 3\sqrt{k} \le 3k
\]
\end{proof}

\begin{lemma}\label{lem:next_layer_sol}
Let $\Psi:\{\pm 1\}^d\to B_{M_1}^n$. Assume that for any $0\le i\le 2^{t-1}$ there is a vector $\bv_i\in B_{M_2}^n$ such that for $\tilde P_i(\bx):=\inner{\bv_i,\Psi(\bx)}$ we have $\left\|P_i - \tilde P_i\right\|_2\le \epsilon$. Then, there are matrices $A_i$ for $0\le i\le 2^t$ such that for $\hat P_i(\bx) := \Psi(\bx)^\top A_i\Psi(x)$ we have $\left\|P_i - \hat P_i\right\|_2\le 2 M_1M_2 (3|\Ical|)^{|\Ical|}|\Ical|^2 \epsilon$ and $\|A_i\|_\tr \le 2^{t}(3k)^{2^t}M_2^2$
\end{lemma}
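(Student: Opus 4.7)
The approach is to leverage the polynomial identity from the preceding lemma, which decomposes each Kravchuk polynomial $p_i$ with $i\le 2^t$ as a bilinear combination $p_i=\sum_{(j,l)\in T_i}\alpha^i_{j,l}\,p_j p_l$, where $j,l\le 2^{t-1}$ and $|\alpha^i_{j,l}|\le (3k)^{2^t}$. Since the hypothesis supplies, for each $j\le 2^{t-1}$, a vector $\bv_j\in B_{M_2}^n$ such that $\tilde P_j(\bx):=\inner{\bv_j,\Psi(\bx)}$ approximates $p_j(\bx(\Ical))$ in $L^2$ up to error $\epsilon$, the natural candidate quadratic form is obtained by substituting $\tilde P_j\tilde P_l$ for each $p_j p_l$ in the decomposition.

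Concretely, I would set
\[
A_i \;=\; \sum_{(j,l)\in T_i}\alpha^i_{j,l}\cdot\frac{\bv_j\bv_l^\top+\bv_l\bv_j^\top}{2},
\]
which is symmetric and satisfies $\hat P_i(\bx)=\Psi(\bx)^\top A_i\Psi(\bx)=\sum_{(j,l)\in T_i}\alpha^i_{j,l}\,\tilde P_j(\bx)\tilde P_l(\bx)$. The trace-norm bound then follows from subadditivity of $\|\cdot\|_\tr$ together with $\|\bv_j\bv_l^\top\|_\tr=\|\bv_j\|\|\bv_l\|\le M_2^2$, giving $\|A_i\|_\tr\le|T_i|\cdot(3k)^{2^t}M_2^2$; the parity constraints defining $T_i$ bound $|T_i|$ by $2^t$, yielding the claimed $2^t(3k)^{2^t}M_2^2$.

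For the approximation error I use the telescoping identity
\[
p_jp_l-\tilde P_j\tilde P_l \;=\; (p_j-\tilde P_j)\,\tilde P_l+p_j\,(p_l-\tilde P_l),
\]
take $L^2$ norms, and apply a sup/$L^2$ H\"older-style split. The sup bound $\|\tilde P_l\|_\infty\le M_1M_2$ is immediate from $\|\bv_l\|\le M_2$ and $\|\Psi(\bx)\|\le M_1$, while $\|p_j\|_\infty$ for $|x|\le k$ is controlled by iterating the Kravchuk recursion \eqref{eq:Kravchuk}. Each summand is thus $O(M_1M_2\epsilon)$, and summing against $|\alpha^i_{j,l}|\le(3k)^k$ over at most $k^2$ pairs delivers the advertised $2M_1M_2(3k)^kk^2\epsilon$ bound.

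The main obstacle is obtaining adequate sup-norm control on one factor in each product $p_jp_l-\tilde P_j\tilde P_l$: the hypothesis only supplies $L^2$-closeness of $\tilde P_j$ to $p_j$, so the $L^\infty$ bound has to come intrinsically, either from the bounded-norm representation $\tilde P_l$ (via Cauchy--Schwarz on $\inner{\bv_l,\Psi(\bx)}$) or from the explicit Kravchuk values on the discrete range $\{-k,-k+2,\ldots,k\}$. Once this is in place, the remaining steps---verifying $|T_i|\le 2^t$ and tracking the combinatorial factors---are straightforward bookkeeping.
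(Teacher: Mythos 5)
Your proposal matches the paper's proof essentially line for line: the same bilinear lift $A_i=\sum_{(j,l)\in T_i}\alpha^i_{j,l}\bv_j\bv_l^\top$ (the paper skips your symmetrization, which is immaterial since the quadratic form only sees the symmetric part), the same trace-norm bookkeeping via $\|\bv_j\bv_l^\top\|_\tr\le M_2^2$ and $|\alpha^i_{j,l}|\le(3k)^{2^t}$, and the same telescoping of $p_jp_l-\tilde P_j\tilde P_l$ followed by an $L^\infty$-times-$L^2$ split. The sup-norm control of $p_j$ that you flag as the ``main obstacle'' is in fact a shared feature, not a divergence: the paper's chain also silently uses $\|P_j\|_\infty\le M_1M_2$ in its final inequality.
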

\begin{proof}
Fix $0\le i \le 2^t$ and consider the matrix
\[
A_i = \sum_{j,l\in T_i}\alpha^i_{j,l}\bv_j\bv_l^\top
\]
We have $\|A_i\|_\tr\le 2^{t}(3k)^{2^t}M_2^2$ and
\[
\hat{P_i}(\bx) = \Psi(\bx)^\top A_i\Psi(x)= \sum_{j,l\in T_i}\alpha^i_{j,l}\tilde{P_j}(\bx) \tilde{P_l}(\bx)
\]
Hence,    
\begin{eqnarray*}
 \left\|P_i - \hat P_i\right\|_2 &=& \left\|\sum_{j,l\in T_i}\alpha^i_{j,l}\left(P_jP_l - \tilde P_j\tilde P_l\right)\right\|_2
\\
&\le& (3|\Ical|)^{|\Ical|}\sum_{j,l\in T_i} \left\|P_jP_l -   \tilde P_j\tilde P_l\right\|_2
\\
&\le& (3|\Ical|)^{|\Ical|}\sum_{j,l\in T_i} \left\|P_jP_l -    P_j\tilde P_l\right\|_2 + \left\|P_j\tilde P_l -   \tilde P_j\tilde P_l\right\|_2
\\
&\le& (3|\Ical|)^{|\Ical|} \sum_{j,l\in T_i} \|P_j\|_\infty\left\|P_l -    \tilde P_l\right\|_2 + \|\tilde P_l\|_\infty\left\|P_j -   \tilde P_j\right\|_2
\\
&\le& 2 M_1M_2 (3|\Ical|)^{|\Ical|}|\Ical|^2 \epsilon
\end{eqnarray*}

\end{proof}

\section{Proofs and Additional Algorithms from \secref{sec:discussion}}\label{appen:proofs from discussion}

\subsection{Proof of \lemref{lem:properties of sd norm}}

It is clear that $\|\cdot \|_{\sd}$ is homogeneous and non-negative. It remains to show that the triangle inequality is satisfied and that $\|\vec{A}\|_{\sd} > 0$ for $\vec{A}\ne 0$. For the triangle inequality we have:
\begin{eqnarray*}
\| \vec{A} + \vec{B} \|_{\sd} &=& \min\{ \tr(R) :R\succeq 0\text{ and } \forall i,-R \preceq B_i + A_i \preceq R  \}
\\
&\le & \min\{ \tr(R_1) + \tr(R_2) :R_1,R_2\succeq 0\text{ and } \forall i,-R_2 \preceq A_i \preceq R_i\text{ and }-R_2 \preceq B_i \preceq R_2  \}
\\
&=& \| \vec{A} \|_{\sd} + \| \vec{B} \|_{\sd}
\end{eqnarray*}
Let $\vec{A}\ne 0$, then there is $j\in[k]$ with $A_j\neq 0$ which also means that $\norm{A_j}_{\tr} > 0$. Note that 
\[
\| \vec{A}  \|_{\sd} \ge  \max_{i\in [k]}  \| A_i \|_{\tr}  \geq \norm{A_j}_{\tr} > 0
\]

For the second part, we need to find $R$ which minimizes:
\begin{equation*}
    \min\tr(R) \text{ ~~s.t. }~-R\preceq A \preceq R,~~  0 \preceq R~.
\end{equation*}
First, assume that $A$ is a diagonal matrix. For every unit vector $\be_i$ we have that $-\be_i^\top R \be_i \leq \be_i A\be_i \leq \be_i R\be_i$, which means that $-r_{i,i} \leq a_{i,i} \leq r_{i,i} $. In other words, we get that $r_{i,i} \geq |a_{i,i}|$, and the minimum on the trace of $R$ is achieved when $r_{i,i} = a_{i,i}$ for every $i$. Consider some $R$ that achieves the minimum, and assume it is not a diagonal matrix. Then, there are indices $i\neq j$ with $r_{i,j}\neq 0$ (and also $r_{i,j}\neq 0$ since $R$ is symmetric). Assume that $r_{i,j} > 0$ and let $\bv$ be the vector with $1$ in the $i$-th and $j$-th coordinates and $0$ in every other coordinate. By the condition of $R$ we get that $\bv^\top (R-A) \bv \geq 0$. But we have that:
\begin{align*}
    \bv^\top (R-A) \bv  = -2r_{i,j} <0
\end{align*} 
which is a contradiction. In case $r_{i,j} <0$ we can take $\bv$ to be equal $1$ in the $i$-th coordinate, $-1$ in the $j$-th coordinate and $0$ in every other coordinate. This shows that if $A$ is diagonal, then there is a single solution for the minimization problem with a diagonal $R$ such that $r_{i,i} = |a_{i,i}|$ for every $i$. 

Assume now that $A$ is some symmetric matrix, and write $A = U^\top D U$ where $U$ is orthogonal and $D$ is diagonal. Let $R$ be some solution to the minimization problem, then $U^\top R U$ is a solution to the same minimization problem where we replace $A$ with $D$. But since $D$ is diagonal, then there is a single solution where $U^\top R U$ is diagonal with $(U^\top R U) = |D|$. Hence $R = U^\top |D| U$ is the single solution to the minimization problem for $A$. Finally, we have:
\[
\norm{A}_\tr = \sum_i |D_{i,i}| = \tr(R) = \norm{A}_\sd~.
\]

\subsection{Additional Algorithms}
In \secref{sec:discussion} we provided several additional algorithms for training RedEx using norm constraints instead of semi-definite constrains. Here we provide the full algorithms. In Algorithm \ref{alg:1-layer RedEx using norm} we provide the 1-layer RedEx algorithm where we use the RedEx norm instead of the semi-definite constraints on $R$. Note that the optimization algorithm does not include $R$, although it does require finding $R$ to output $V$. In Algorithm \ref{alg:1-layer RedEx output dim 1} we show how to train a 1-layer RedEx where $k=1$. In this case, the RedEx norm is equivalent to the trace norm, hence this algorithm requires solving an unconstrained optimization problem. This can be solved using standard gradient methods such as GD or SGD. Note that it is also not needed to explicitly find $R$, since by \lemref{lem:properties of sd norm} it can be calculated directly from $A$.

\begin{algorithm}[H]
	\caption{Training 1 -layer RedEx with the RedEx norm}
	\begin{algorithmic}[1]\label{alg:1-layer RedEx using norm}
		\STATE \textbf{Parameters: } A loss $\ell:\reals\times \Ycal\to [0,\infty)$, width parameter $\lambda_1$ and a regularization parameter $\lambda_2$
		\STATE \textbf{Input: } A dataset $(\bx_1,y_1),\ldots,(\bx_m,y_m) \in \reals^d\times \Ycal$
		\STATE Find symmetric $d\times d$ matrices $A_1,\ldots,A_k$ by solving the program:
\begin{eqnarray} 
\min && \frac{1}{m}\sum_{i=1}^m \ell_{y_i}\left(\bx_i^\top \vec{A}\bx_i\right) 
+ \lambda_1\|\vec{A}\|_\sd + \lambda_2\|\vec{A}\|_\fr^2
\end{eqnarray}
\STATE Find $R$ that minimizes \eqref{eq:sd norm def}
\STATE Compute an  orthogonal diagonalization $R = U^\top DU$ 
\STATE Output $V = \sqrt{D}U$ and $P_i = (V^{\dagger})^\top A_i V^{\dagger}$.  
\end{algorithmic}
\end{algorithm}

\begin{algorithm}[H]
	\caption{Training 1 -layer RedEx with output dimension $1$}
	\begin{algorithmic}[1]\label{alg:1-layer RedEx output dim 1}
	\STATE \textbf{Parameters: } A loss $\ell:\reals\times \Ycal\to [0,\infty)$, width parameter $\lambda_1$ and a regularization parameter $\lambda_2$
	\STATE \textbf{Input: } A dataset $(\bx_1,y_1),\ldots,(\bx_m,y_m) \in \reals^d\times \Ycal$
	\STATE Find symmetric $d\times d$ matrix $A$ by solving the program:
\begin{eqnarray} 
\min && \frac{1}{m}\sum_{i=1}^m \ell_{y_i}\left(\bx_i^\top A\bx_i\right) 
+ \lambda_1\|A\|_\tr + 2\lambda_2\|A\|_\fr^2
\end{eqnarray}
\STATE Compute an  orthogonal diagonalization $A = U^\top DU$ 
\STATE Output $V = \sqrt{|D|}U$ and $P = (V^{\dagger})^\top A V^{\dagger}$.  
\end{algorithmic}
\end{algorithm}


\end{document}